\newif\ifarxiv
\DeclareRobustCommand\onedot{\futurelet\@let@token\@onedot}
\def\@onedot{\ifx\@let@token.\else.\null\fi\xspace}
\DeclareMathOperator*{\Var}{Var}
\DeclareMathOperator*{\mVar}{mVar}
\DeclareMathOperator*{\mSqNorm}{mSqNorm}
\DeclareMathOperator*{\Cov}{Cov}
\DeclareMathOperator*{\Exp}{\mathbb{E}}
\newcommand{\rebuttal}[1]{\textcolor{black}{#1}}
\titlespacing{\paragraph}{0pt}{0pt}{0.5em}
\titlespacing{\section}{0pt}{0pt}{0pt}
\titlespacing{\subsection}{0pt}{0pt}{0pt}
\def\ie{\emph{i.e}\onedot} 
\def\cf{\emph{c.f}\onedot}
\newtheorem{proposition}{Proposition}
\newtheorem{lemma}{Lemma}
\newtheorem*{remark}{Remark}
\title{Scaling Forward Gradient With Local Losses}
\author{Mengye Ren${}^1$\thanks{Work done as a visiting faculty researcher at Google.}\ , 
Simon Kornblith${}^2$,
Renjie Liao${}^3$, 
Geoffrey Hinton${}^{2,4}$ \\
${}^1$NYU, ${}^2$Google, ${}^3$UBC, ${}^4$Vector Institute
}
\author{Mengye Ren${}^1$\thanks{Work done as a visiting faculty researcher at Google. Correspondence to: \texttt{mengye@cs.nyu.edu}.}\ , 
Simon Kornblith${}^2$,
Renjie Liao${}^3$, 
Geoffrey Hinton${}^{2,4}$ \\
${}^1$NYU, ${}^2$Google, ${}^3$UBC, ${}^4$Vector Institute
}
\begin{document}

\maketitle

\begin{abstract}
\looseness=-1000
Forward gradient learning computes a noisy directional gradient and is a biologically plausible alternative to backprop for learning deep neural networks. However, the standard forward gradient algorithm, when applied naively, suffers from high variance when the number of parameters to be learned is large. In this paper, we propose a series of architectural and algorithmic modifications that together make forward gradient learning practical for standard deep learning benchmark tasks. We show that it is possible to substantially reduce the variance of the forward gradient estimator by applying perturbations to activations rather than weights. We further improve the scalability of forward gradient by introducing a large number of local greedy loss functions, each of which involves only a small number of learnable parameters, and a new MLPMixer-inspired architecture, LocalMixer, that is more suitable for local learning. Our approach matches backprop on MNIST and CIFAR-10 and significantly outperforms previously proposed backprop-free algorithms on ImageNet.
\ifarxiv
Code is released at \url{https://github.com/google-research/google-research/tree/master/local_forward_gradient}.
\else
Code is released at \url{https://github.com/google-research/google-research/tree/master/local_forward_gradient}.
\fi
\end{abstract}

\section{Introduction}
Most deep neural networks today are trained using the backpropagation algorithm (a.k.a. backprop)~\citep{werbos1974beyond,lecun1985learning,rumelhart1987backprop}, which efficiently computes the gradients of the weight parameters by propagating the error signal backwards from the loss function to each layer. Although artificial neural networks were originally inspired by biological neurons, backprop has always been considered as ``biologically implausible'' as the brain does not form symmetric backward connections or perform synchronized computations. From an engineering perspective, backprop is incompatible with a massive level of model parallelism, and restricts potential hardware designs. These concerns call for a drastically different learning algorithm for deep networks.

In the past, there have been attempts to address the above weight transport problem by introducing random backward weights~\citep{lillicrap2016fa,nokland2016dfa}, but they have been found to scale poorly on larger datasets such as ImageNet~\citep{bartunov2018scalability}. Addressing the issue of global synchronization, several papers showed that greedy local loss functions can be almost as good as end-to-end learning~\citep{belilovsky2018greedy,lowe2019gim,xiong2020loco}. However, they still rely on backprop for learning a number of internal layers within each local module.

Approaches based on weight perturbation, on the other hand, directly send the loss signal back to the weight connections and hence do not require any backward weights. In the forward pass, the network adds a slight perturbation to the synaptic connections and the weight update is then multiplied by the negative change in the loss. Weight perturbation was previously proposed as a biologically plausible alternative to backprop~\citep{xie1999spike,seung2003learning,fiete2006gradient}. Instead of directly perturbing the weights, it is also possible to use forward-mode automatic differentiation (AD) to compute a directional gradient of the final loss along the perturbation direction~\citep{pearlmutter1994fast}. Algorithms based on forward-mode AD have recently received renewed interest in the context of deep learning~\citep{baydin2022forward,silver2022directional}. However, existing approaches suffer from the curse of dimensionality, and the variance of the estimated gradients is too high to effectively train large networks. 

In this paper, we revisit activity perturbation~\citep{lecun1988,widrow1990adaptive,fiete2006gradient} as an alternative to weight perturbation. \rebuttal{As previous works focused on specific settings,} we explore the general applicability to large networks trained on challenging vision tasks. We prove that activity perturbation yields lower-variance gradient estimates than weight perturbation, and provide a continuous-time rate-based interpretation of our algorithm. We directly address the scalability issue of forward gradient learning by designing an architecture with many local greedy loss functions, isolating the network into local modules and hence reducing the number of learnable parameters per loss. Unlike prior work that only adds local losses along the depth dimension, we found that having patch-wise and channel group-wise losses is also critical. Lastly, inspired by the design of MLPMixer~\citep{tolstikhin2021mlpmixer}, we designed a network called LocalMixer, featuring a linear token mixing layer and grouped channels for better compatibility with local learning.

We evaluate our local greedy forward gradient algorithm on supervised and self-supervised image classification problems. On MNIST and CIFAR-10, our learning algorithm performs comparably with backprop, and on ImageNet, it performs significantly better than other biologically plausible alternatives using asymmetric forward and backward weights. Although we have not fully matched backprop on larger-scale problems, we believe that local loss design could be a critical ingredient for biologically plausible learning algorithms and the next generation of model-parallel computation.
\section{Related Work}
\looseness=-10000
Ever since the perceptron era, the design of learning algorithms for neural networks, especially algorithms that could be realized by biological brains, has been a central interest. Review papers by~\citet{whittington2019theory} and ~\citet{lillicrap2020backprop} have systematically summarized the progress of biologically plausible deep learning. Here, we discuss related work in the following subtopics.

\paragraph{Forward gradient and reinforcement learning.} 
\looseness=-1000
Our work leverages forward-mode automatic differentiation (AD), which was first proposed by ~\citet{wengert1964simple}. Later it was used to learn recurrent neural networks~\citep{williams1989learning} and to compute Hessian vector products~\citep{pearlmutter1994fast}. Computing the true gradient using forward-mode AD requires the full Jacobian, which is often large and expensive to compute. Recently, \citet{baydin2022forward} and \citet{silver2022directional} proposed to update the weights based on the directional gradient along a random \rebuttal{or learned} perturbation direction. They found that this approach is sufficient for small-scale problems. This general family of algorithms is also related to reinforcement learning (RL) and evolution strategies (ES), since in each case the network receives a global reward. RL and ES have a long history of application in neural networks~\citep{whitleya1993genetic,stanley2002neat,salimans2017evolution}, and they are effective for certain continuous control and decision-making tasks. \citet{clark2021credit} found global credit assignment can also work well in vector neural networks where weights are only present between vectorized groups of neurons.

\paragraph{Greedy local learning.} There have been numerous attempts to use local greedy learning objectives for training deep neural networks.
Greedy layerwise pretraining~\citep{bengio2006greedy,hinton2006fast,vincent2010sdae} trains individual layers or modules one at a time to greedily optimize an objective. Local losses are typically applied to different layers or residual stages, using common supervised and self-supervised loss formulations~\citep{belilovsky2018greedy,nokland2019local,lowe2019gim,belilovsky2020decoupled}. \citet{xiong2020loco,gomez2020interlocking} proposed to use overlapped losses to reduce the impact of greedy learning. \citet{patel2022group} proposed to split a network into neuron groups. \citet{laskin2020parallel} applied greedy local learning on model parallelism training, and \citet{wang2021revisiting} proposed to add a local reconstruction loss for preserving information. However, most local learning approaches proposed in the last decade rely on backprop to compute the weight updates within a local module. One exception is the work of \citet{nokland2019local}, which avoided backprop by using layerwise objectives coupled with a similarity loss or a feedback alignment mechanism. Gated linear networks and their variants~\citep{veness2017online,veness2021gated,sezener2021rapid} ask every neuron to make a prediction, and have shown interesting results on avoiding catastrophic forgetting.
From a theoretical perspective, \citet{baldi2016theory} provided insights and proofs on why local learning can be worse than global learning. 

\paragraph{Asymmetric feedback weights.} Backprop relies on weight symmetry: the backward weights are the same as the forward weights. Past research has looked at whether this constraint is necessary. \citet{lillicrap2016fa} proposed \emph{feedback alignment} (FA) that uses random and fixed backward weights and found it can support error driven learning in neural networks. Direct FA~\citep{nokland2016dfa} uses a single backward layer to wire the loss function back to each layer. There have also been methods that aim to explicitly update backward weights. Recirculation~\citep{hinton1987recirculation} and target propagation (TP)~\citep{bengio2014targetprop,lee2015difference,bartunov2018scalability} use local reconstruction objective to learn separate forward and backward weights as approximate inverses of each other. Ladder networks~\citep{rasmus2015ladder} found local reconstruction objectives and asymmetric weights can help achieve strong semi-supervised learning performance. However, \citet{bartunov2018scalability} reported both FA and TP algorithms do not scale to larger problems such as ImageNet, where their error rates are over 90\%. \citet{liao2016sign,xiao2019scale} proposed sign symmetry (SS) where each backward connection weight share the same sign as the forward counterpart. \citet{akrout2019deep} proposed weight mirroring and the modified Kolen-Pollack algorithm~\citep{kolen1994backpropagation} to align forward and backward weights. \citet{woo2021activation} proposed to update using activities from several layers below to avoid bidirectional connections. Compared to these works, we circumvent the issue of weight symmetry, and more generally network symmetry, by using only reward (and the change rate thereof), instead of backward weights.

\paragraph{Biologically plausible perturbation learning.} Forward gradient is related to perturbation learning in the biology context. Traditionally, neural plasticity learning rules focus on deriving weight updates as a function of the input and output activity of a neuron~\citep{hebb1949organization,widrow1960adaptive,oja,bcm,abbott2000synaptic}. Weight perturbation learning~\citep{jabri1992weight}, on the other hand, is much more general as it permits any form of global reward~\citep{schultz1997neural}. It was developed in both rated-based and spiking-based formuations~\citep{xie1999spike,seung2003learning}. Activity (or node) perturbation was proposed in shallow networks~\citep{lecun1988,widrow1990adaptive} and later in a spike-based continuous time network~\citep{fiete2006gradient}, where it was interpreted as the perturbation of the conductance of neurons. \citet{werfel2003learning} showed that backprop has a faster convergence rate than perturbation learning, and activity perturbation wins over weight perturbation by another factor. In our work, we show activity perturbation has lower gradient estimation variance compared to weight perturbation.
\section{Forward Gradient Learning}
In this section, we review and establish the technical background for our learning algorithm. We first review the technique of forward-mode automatic differentiation (AD). Second, we formulate two different types of perturbation in the weight space or activity space.

\subsection{Forward-mode automatic differentiation (AD)}
Let $f: \mathbb{R}^m \mapsto \mathbb{R}^n$. The Jacobian of $f$, $J_f$, is a matrix of size $n \times m$. Forward-mode AD computes the matrix-vector product $J_f \mathbf{v}$, where $\mathbf{v} \in \mathbb{R}^{m}$. It is defined as the directional gradient along $\mathbf{v}$ evaluated at $\mathbf{x}$:
\begin{align}
J_f \mathbf{v} := \lim_{\delta \mapsto 0} \frac{f(\mathbf{x} + \delta \mathbf{v}) - f(\mathbf{x})}{\delta}.
\end{align}
For comparison, backprop, also known as reverse-mode AD, computes the vector-Jacobian product $\mathbf{v} J_f$, where $\mathbf{v} \in \mathbb{R}^{n}$, which corresponds to the last term in the chain rule.
In contrast to reverse-mode AD, forward-mode AD only requires one forward pass, which is augmented with the derivative information. To compute the Jacobian vector product of a node in a computation graph, first the input node will be augmented with $\mathbf{v}$, which is the vector to be multiplied. Then for other nodes, we send in a tuple of $(\mathbf{x}, \mathbf{x}')$ as inputs and compute a tuple $(\mathbf{y}, \mathbf{y}')$ as outputs, where $\mathbf{x}'$ and $\mathbf{y}'$ are the intermediate derivatives at node $\mathbf{x}$ and node $\mathbf{y}$, \ie $\mathbf{y}' = \frac{d\mathbf{y}}{d\mathbf{x}} \mathbf{x}'$, and $\frac{d\mathbf{y}}{d\mathbf{x}}$ is the Jacobian between $\mathbf{y}$ and $\mathbf{x}$. In the JAX library~\citep{jax2018github}, forward-mode AD is implemented as \texttt{jax.jvp}.

\subsection{Weight-perturbed forward gradient}
Weight perturbation to generate weight updates was originally explored in ~\citep{barto1983neuronlike,xie1999spike,seung2003learning}. \citet{baydin2022forward} uses the technique of forward-mode AD to implement weight perturbation, which is better than finite differences in terms of numerical stability. Let $w_{ij}$ be the weight connection between unit $i$ and $j$, and $f$ be the loss function. We can estimate the gradient by sampling a random matrix with iid elements $v_{ij}$ drawn from a zero-mean unit-variance Gaussian distribution. The estimator is
\begin{align}\textstyle
g_w(w_{ij}) = \left(\sum_{i'j'} \nabla w_{i'j'} v_{i'j'} \right) v_{ij}.
\end{align}
Intuitively, this estimator samples a random perturbation direction $v_{ij}$ and tests how it aligns with the true gradient $\nabla w_{i'j'}$
by using forward-mode to perform the dot product, and then multiplies the scalar alignment with the perturbation direction again. \citet{baydin2022forward} referred this form of gradient estimation using forward-mode AD as ``forward gradient''. To distinguish with another form of perturbation we detail later, we refer this to as ``weight-perturbed forward gradient'', or simply as ``weight perturbation''. 

\subsection{Activity-perturbed forward gradient}
\looseness=-1000
An alternative to perturbing the weights is to instead perturb the activities, which can reduce the number of perturbation dimensions per example. 
Activity perturbation was originally explored in \citet{lecun1988,widrow1990adaptive} under restrictive assumptions. Here, we introduce a general way to estimate gradients using activity perturbation.
It is potentially biologically plausible, since it could correspond to perturbation of the conductance in each neuron~\citep{fiete2006gradient}. Here, we focus on a discrete-time rate-based formulation for simplicity. Let $x_i$ denote the activity of the $i$-th pre-synaptic neuron and $z_j$ denote that of the $j$-th post-synaptic neuron before the non-linear activation function, and $u_j$ be the perturbation of $z_j$. The activity-perturbed forward gradient estimator is
\begin{align}\textstyle
g_a(w_{ij}) = x_i \left(\sum_{j'} \nabla z_{j'} u_{j'} \right) u_{j},
\end{align}
where the inner product between $\nabla \mathbf{z}$ and $\mathbf{u}$ is again computed by using forward-mode AD.

\subsection{Theoretical properties}
In this section we aim to analyze the expectation and variance properties of forward gradient estimators. We focus our analysis on the gradient of one weight matrix $\{w_{ij}\}$, but the conclusion holds for a network with many weight matrices too.


\begin{table}[t]
\ifarxiv
\else
\vspace{-0.25in}
\fi
\begin{center}
\begin{small}
    \begin{tabular}{c|ccc}
    \toprule
                              & Unbiased? & Avg. Variance (shared) & Avg. Variance (independent)\\
    \midrule
    $g_w(\cdot)$     & Yes    & $\frac{pq+2}{N}V + (pq+1)S$ & $\frac{pq+2}{N}V + \frac{pq+1}{N}S$\\
    $g_a(\cdot)$     & Yes    & $\frac{q+2}{N}V + (q+1)S$   & $\frac{q+2}{N}V + \frac{q+1}{N}S$\\
    \bottomrule
    \end{tabular}
\end{small}
\end{center}
\vspace{-0.1in}
    \caption{Comparing weight ($g_w$) and activity ($g_a$) perturbation. $V$=dimension-wise avg. gradient variance, $S$=dimension-wise avg. squared gradient norm; $p$=fan-in; $q$=fan-out; $N$=batch size.}
    \label{tab:variance}
\vspace{-0.15in}
\end{table}

 

Table~\ref{tab:variance} summarizes the theoretical results\footnote{All proofs can be found in Appendix~\ref{app:unbias} and \ref{app:variance}. Numerical simulation results can be found in Appendix~\ref{app:variance_theory}.}. With a batch size of $N$, independent perturbation can achieve $1/N$ reduction of variance, whereas shared perturbation has a constant variance term dominated by the squared gradient norm. 
However, when performing independent weight perturbation, matrix multiplications cannot be batched because each example's activation vector is multiplied with a different weight matrix. By contrast, independent activity perturbation admits batched matrix multiplications.
Moreover, activity perturbation enjoys a factor of fan-in ($p$) times smaller variance compared to weight perturbation since the number of perturbed elements is the number of output units instead of the size of the whole weight matrix. The only drawback of activity perturbation is the memory required for storage of intermediate activations, in exchange for a factor of $Np$ reduction in variance. However, for both activity and weight perturbation, the variance still grows with larger networks. In Section~\ref{sec:localloss} we will further reduce the variance by introducing local loss functions. 

\subsection{Continuous-time rate-based models}
Forward-mode AD can be viewed as computing the first-order time derivative in a continuous-time physical system. Suppose the tuples passed between nodes of the computation graph are $(\mathbf{x}, \dot{\mathbf{x}})$, where $\dot{\mathbf{x}}$ is the change in $\mathbf{x}$ over time. The computation is then the same as forward-mode AD. For each node, $\dot{\mathbf{y}} = \frac{d\mathbf{y}}{d\mathbf{x}} \dot{\mathbf{x}}$, where $\frac{d\mathbf{y}}{d\mathbf{x}}$ is the Jacobian between the output and the input. Note that in a physical system we don't have to explicitly perform the differentiation operation by running two forward passes. Instead the first-order derivative information is readily available in the analog signal, and we only need to plug the output signal into a differentiator circuit.

The activity-perturbed learning rule for a continuous time system is thus
$\dot{w}_{ij} \propto x_i \dot{y_j} \dot{r}$,
where $x_i$ is the pre-synaptic activity, and $\dot{y}_j$ is the rate of change in the post-synaptic activity, which is the perturbation direction for a small period of time, and $\dot{r}$ is the rate of change of reward (or the negative loss). The reward controls whether learning is Hebbian or anti-Hebbian. Both \citet{hinton2007backpropagation} and \citet{bengio2017stdp} propose to use a product of pre-synaptic activity and the rate of change of postsynaptic activity. However, they did not consider using the rate of change of reward as a modulator and instead relied on another set of feedback weights to communicate the error signal through inputs. In contrast, we show that by broadcasting the rate of change of reward, we can actually bypass the weight transport problem.

\subsection{Activation sparsity and normalization functions}
In networks with ReLU activations, we can leverage ReLU sparsity to achieve further variance reduction, because the inactivated units will have zero gradient and therefore we should not perturb these units, and set the perturbation to be zero.

Normalization layers are often added in deep neural networks after the linear layer. To compute the correct gradient in activity perturbation, we also need to account for normalization in the weight update rule. Since there is no backward weight connections, one option is to simply apply backprop on normalization layers. However, we also found that it is also fine to ignore the gradient of normalization layer when using layer normalization.

\section{Scaling with Local Losses}
\label{sec:localloss}
As we have explained in the previous section, perturbation learning can suffer from a curse of dimensionality: the variance grows with the number of perturbation dimensions, and in deep networks there are often millions of parameters changing at the same time. One way to limit the number of learnable dimensions is to divide the network into submodules, each with a separate loss function. In this section, we will explore several ways to increase the number of local losses to tame the variance.

\begin{figure}
\centering
\ifarxiv
\else
\vspace{-0.25in}
\fi
\includegraphics[width=0.9\textwidth,trim={0cm 12cm 2cm 0}, clip]{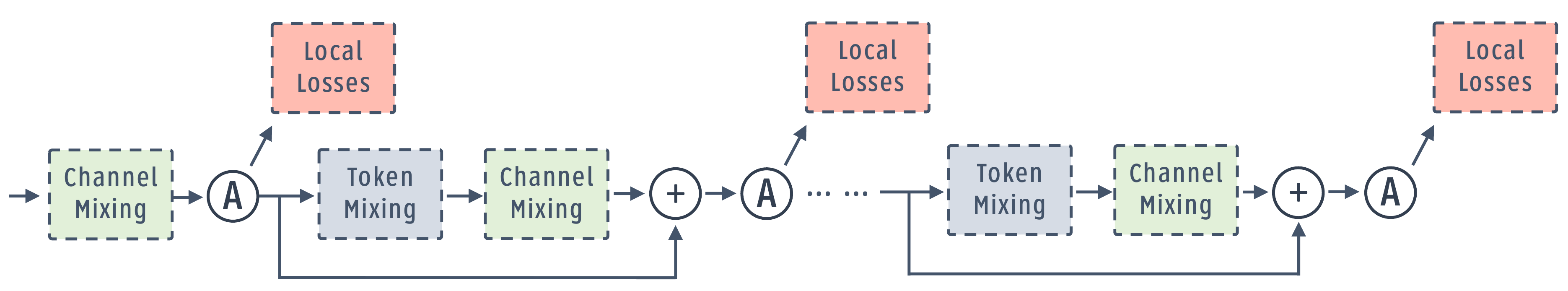}
\vspace{-0.1in}
\caption{A LocalMixer network consists of several mixer  blocks. A=Activation function (ReLU).}
\label{fig:fullmodel}
\vspace{-0.15in}
\end{figure}

\begin{figure}
\ifarxiv
\else
\vspace{-0.05in}
\fi
\includegraphics[width=\textwidth,trim={0.1cm 6cm 11.0cm 0}, clip]{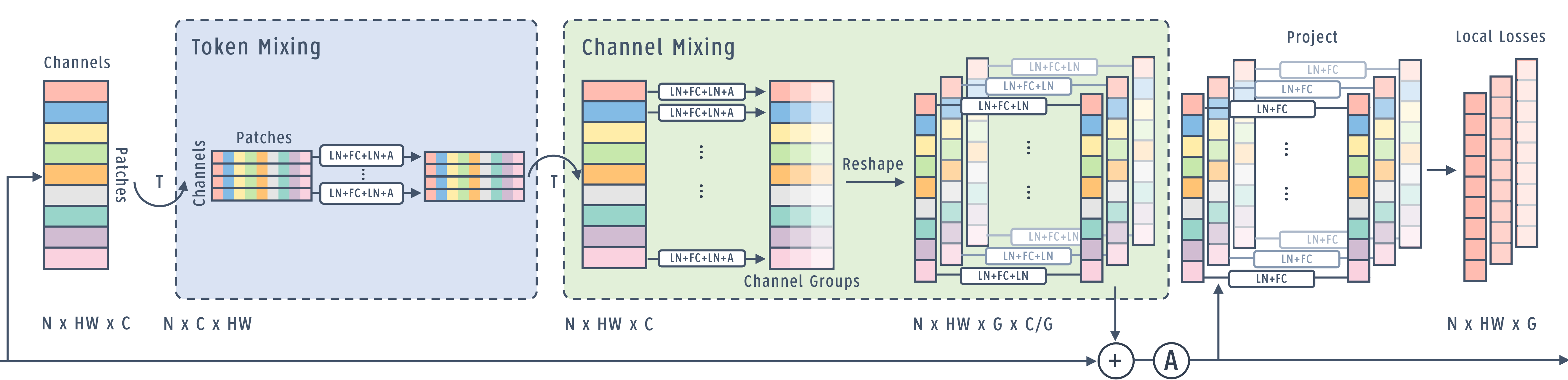}
\vspace{-0.25in}
\caption{A LocalMixer residual block with local losses. Token mixing consists of a linear layer and channels are grouped in the channel mixing layers. Layer norm is applied before and after every linear layer. LN=Layer Norm; FC=Fully Connected layer; A=Activation function (ReLU); T=Transpose.}
\label{fig:localmixerblock}
\vspace{-0.15in}
\end{figure}

\paragraph{1) Blockwise loss.}
First, we will divide the network into modules in depth. Each module consists of several layers. At the end of each module, we compute a loss function, and that loss is used to update the parameters in that module. This approach is equivalent of adding a ``stop gradient'' operator in between modules. Such local greedy losses were previously explored in \citet{belilovsky2018greedy} and \citet{lowe2019gim}.

\paragraph{2) Patchwise loss.}
\looseness=-10000
Sensory input signals such as images have spatial dimensions. We will apply a separate loss patchwise along these spatial dimensions. In the Vision Transformer architecture~\citep{vaswani2017attention,dosovitskiy2021vit}, each  spatial token represents a patch in the image. In modern deep networks, parameters in each spatial location are often shared to improve data efficiency and reduce memory bandwidth utilization. Although naive weight sharing is not biologically plausible, we still consider shared weights in this work. It may be possible to mimic the effect of weight sharing by adding knowledge distillation~\citep{hinton2015distilling} losses in between patches.

\paragraph{3) Groupwise loss.}
Lastly, we turn to the channel dimension. To create multiple losses, we split the channels into a number of groups, and each group is attached to a loss function~\citep{patel2022group}. To prevent groups from communicating between each other, channels are only connected to other channels within the same group. A grouped linear layer is computed as
$
z_{g,j} = \sum_i w_{g,i,j} x_{g,i},$ for individual group $g$.
Whereas previous work used channel groups to improve computational efficiency \citep{krizhevsky2012alexnet,ioannou2017deeproots,xie2017aggregated}, in our work, adding groups contributes to the total number of losses and thus reduces variance.

\paragraph{Feature aggregators.}
Naively applying losses separately to the spatial and channel dimensions leads to suboptimal performances, since each dimension contains only local information. For losses of standard tasks such as classification, the model needs a global view of the inputs to make a decision. Standard architectures obtain this global view by performing global average pooling layer before the final classification layer. We therefore explore strategies for aggregating information from other groups and spatial patches before the local loss function.

We would prefer to perform aggregation without reducing the total number of dimensions. We thus propose a replicated design for feature aggregation, shown in Figure~\ref{fig:agg}. First, channel groups are copied and communicated to one another, but every group except the active group itself is masked with stop gradient so that other groups do not affect the forward gradient computation:
\begin{align}
\mathbf{x}_{p,g} = [\mathrm{StopGrad}(x_{p,1} ... x_{p,g-1}), x_{p,g}, \mathrm{StopGrad}(x_{p,g+1}, ..., x_{p,G})],
\end{align}
where $p$ and $g$ index the patches and groups respectively. Similarly, each spatial location is also copied, communicated, and masked, and then averaged locally:
\begin{align}\textstyle
\overline{\mathbf{x}}_{p,g} = \frac{1}{P} \left( \mathbf{x}_{p,g} + \sum_{p' \neq p} \mathrm{StopGrad} (\mathbf{x}_{p',g}) \right).
\end{align}
The output of feature aggregation is the same as that of the conventional global average pooling layer. The difference is that here the loss is replicated and different patch groups are activated in each loss.

\begin{figure}
\centering
\ifarxiv
\else
\vspace{-0.35in}
\fi
\includegraphics[width=0.8\textwidth,trim={0cm 9.5cm 9.3cm 0}, clip]{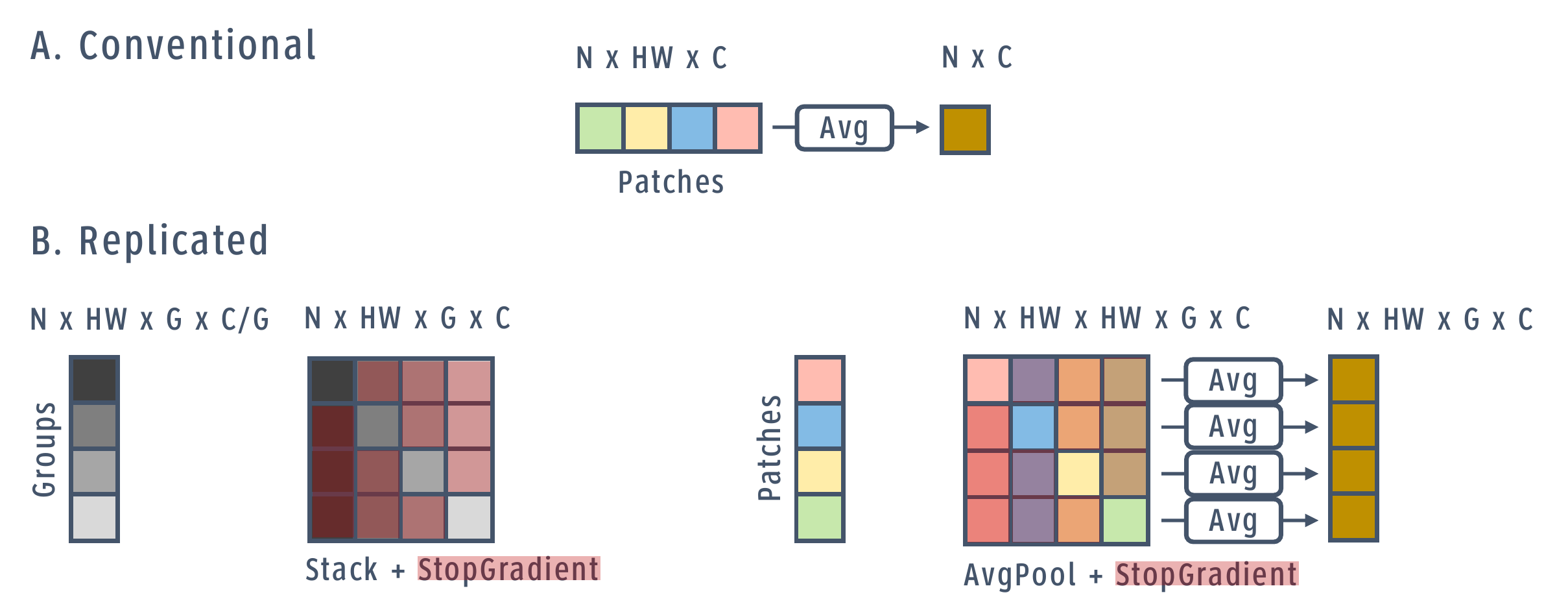}
\vspace{-0.1in}
\caption{Feature aggregator designs. A) In the conventional design, average pooling is performed to aggregate features from different spatial locations. B) We propose the replicated design, features are first concatenated across groups and then averaged across spatial locations. We create copies of the same feature with different stop gradient masks so that we obtain more local losses instead of a global one. The stop gradient mask makes sure that perturbation in one spatial group corresponds to its loss function. The numerical value of the loss function is the same as the conventional design.}
\label{fig:agg}
\vspace{-0.1in}
\ifarxiv
\else
\vspace{-0.25in}
\fi
\end{figure}

\paragraph{Learning objectives.}
We consider the supervised classification loss and the contrastive InfoNCE loss~\citep{vandenoord2018cpc,chen2020simclr}, which are the two most commonly used losses in image representation learning. For supervised classification, we attach a shared linear layer (shared across $p,g$) on top of the aggregated features for a cross entropy loss:
$L^s_{p,g} = - \sum_k t_{k} \log \mathrm{softmax}(W_l \overline{\mathbf{x}}_{p,g})_k$.
The loss is of the same value across each group and patch location.

For contrastive learning, the linear layer becomes a linear feature projector. Suppose $\mathbf{x}^{(1)}_n$ and $\mathbf{x}^{(2)}_n$ are the two different views of the $n$-th example, the InfoNCE loss for contrastive learning is:
\begin{wrapfigure}{r}{0.38\textwidth}
\vspace{-0.14in}
  \begin{center}
    \includegraphics[width=0.38\textwidth]{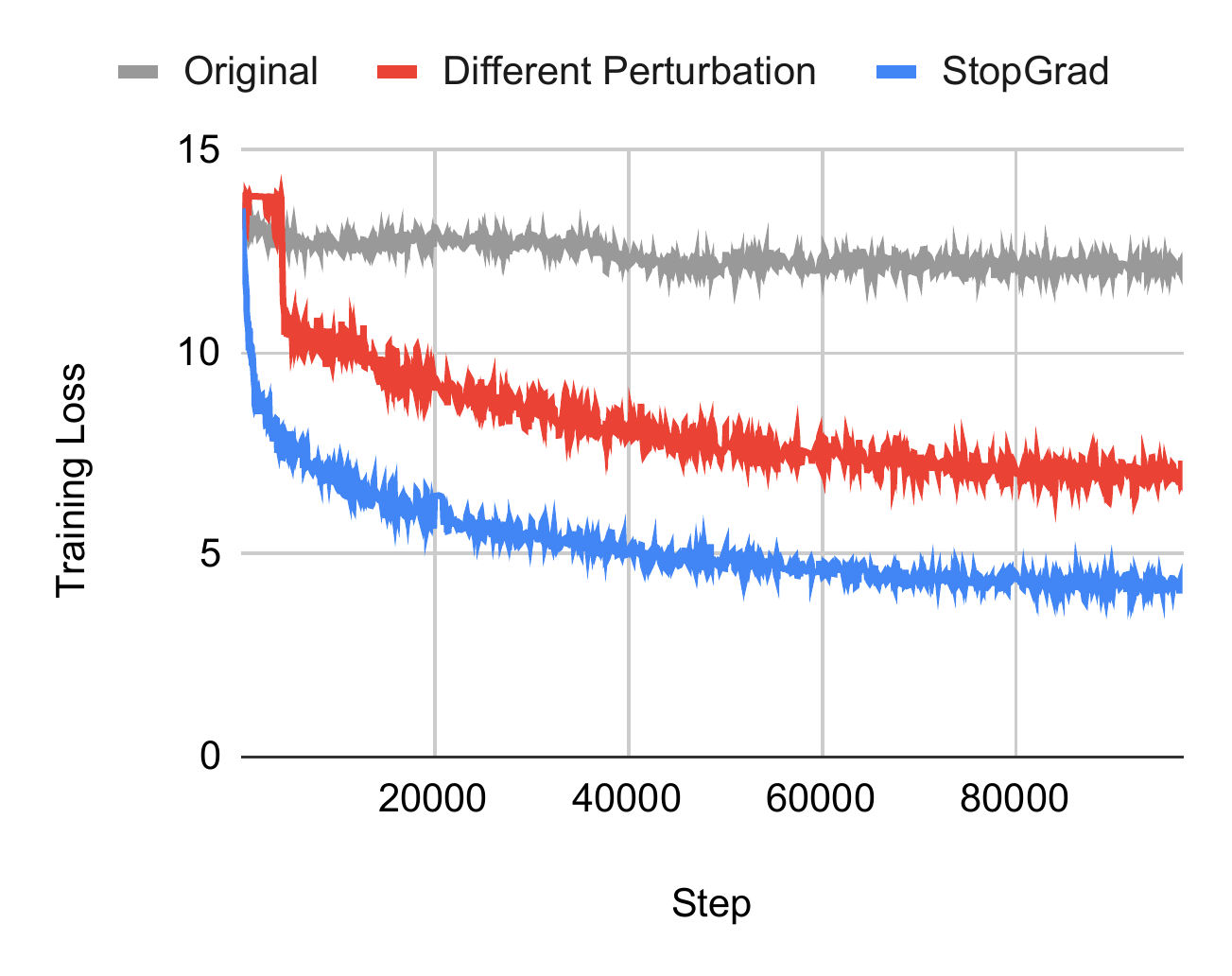}
  \end{center}
  \vspace{-0.3in}
  \caption{Importance of StopGradient in the InfoNCE loss, using M/8 on CIFAR-10 with 256 channels 1 group.}
  \vspace{-0.4in}
  \label{fig:stopgrad}
\end{wrapfigure}
\begin{align}
L^c_{p,g} = - \sum_n \log \frac{(W \overline{\mathbf{x}}^{(1)}_{n,p,g})^\top \mathrm{StopGrad}(W \overline{\mathbf{x}}^{(2)}_{n})}{\sum_m (W \overline{\mathbf{x}}^{(1)}_{n,p,g})^\top \mathrm{StopGrad}(W \overline{\mathbf{x}}^{(2)}_{m})}.
\end{align}

Note that we add a stop gradient operator on the second view. It is usually unnecessary to add this stop gradient in the InfoNCE loss; however, we found that perturbation-based methods require a stop gradient and otherwise the loss will not go down. This is likely because we share the perturbations on both views, and having the same perturbation will increase the dot product between the two views but is not desired from a representation learning perspective. Figure \ref{fig:stopgrad} shows a comparison of the loss curves. Non-shared perturbations also work but are worse than stop gradient.

\section{Implementation}

\paragraph{Network architecture.}
We propose the LocalMixer architecture that is more suitable for local learning. It takes inspiration from MLPMixer~\citep{tolstikhin2021mlpmixer}, which consists of fully connected networks and residual blocks. We leverage the fully connected networks so that each spatial patch performs computations without interfering with other patches, which is more compatible with our local learning objective. An image is divided into non-overlapping patches (\ie tokens), and each block consists of token and channel mixing layers. Figure~\ref{fig:fullmodel} shows the high level architecture, and Figure~\ref{fig:localmixerblock} shows the detailed diagram for one residual block. We add a linear projector/classification layer to attach a loss function at the end of each block. The last layer always uses backprop to update weights. For token mixing layers, we use one linear fully connected layer instead of an MLP, since we would like to make each block as shallow as possible. Before the last channel mixing layer, features are reshaped into a number of groups, and the last layer is fully connected within each feature group. Table~\ref{tab:localmixerconfig} shows architectural details for the different sizes of models we investigate.

\begin{table}[t]
\ifarxiv
\else
\vspace{-0.4in}
\fi
    \label{tab:arch}
    \centering
    \begin{small}
    \begin{tabular}{ccccccc}
    \toprule
    Type                 & Blocks  & Patches      & Channels & Groups  & Params & Dataset  \\
    \midrule
    LocalMixer S/1/1    & 1       & 1$\times$1   & 256      & 1       & 272K   & MNIST    \\
    LocalMixer M/1/16   & 1       & 1$\times$1   & 512      & 16      & 429K   & MNIST    \\
    LocalMixer M/8/16   & 4       & 8$\times$8   & 512      & 16      & 919K   & CIFAR-10 \\
    LocalMixer L/8/64   & 4       & 8$\times$8   & 2048     & 64      & 13.1M  & CIFAR-10 \\
    LocalMixer L/32/64  & 4       & 32$\times$32 & 2048     & 64      & 17.3M  & ImageNet \\
    \bottomrule
    \end{tabular}
    \end{small}
    \vspace{-0.1in}
    \caption{LocalMixer Architecture Details}
    \label{tab:localmixerconfig}
    \vspace{-0.24in}
\end{table}
\paragraph{Normalization.}
There are many ways of performing normalization within a neural network across different tensor dimensions~\citep{krizhevsky2012alexnet,ioffe2015batchnorm,ba2016layer,ren2017divnorm,wu2018group}. We opted for a local variant of layer normalization that normalizes within each local spatial patch of features~\citep{ren2017divnorm}. For grouped linear layers, each group is normalized separately~\citep{wu2018group}. Empirically, we found such local normalization performs better on contrastive learning experiments and about the same as layer normalization on supervised experiments. Local normalization is also more biologically plausible as it does not perform global communication. Conventionally, normalization layers are placed after linear layers. In MLPMixer~\citep{tolstikhin2021mlpmixer}, layer normalization is placed at the beginning of each residual block. We found it is the best to place normalization \emph{before} and \emph{after} each linear layer, as shown in Figure~\ref{fig:localmixerblock}. Empirically this design choice does not make much difference for backprop, but it allows forward gradient learning to learn much faster and achieve lower training errors.

\begin{wrapfigure}{r}{.45\textwidth}
    \vspace{-0.2in}
    \begin{minipage}{\linewidth}
    \centering\captionsetup[subfigure]{justification=centering}
    \includegraphics[width=0.49\linewidth]{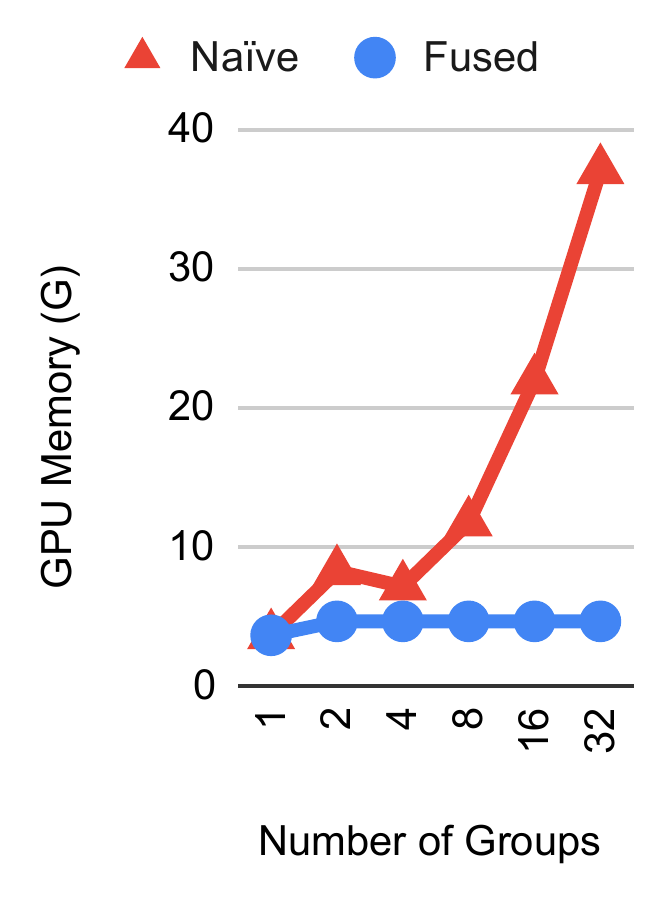}
    \hfill
    \includegraphics[width=0.49\linewidth]{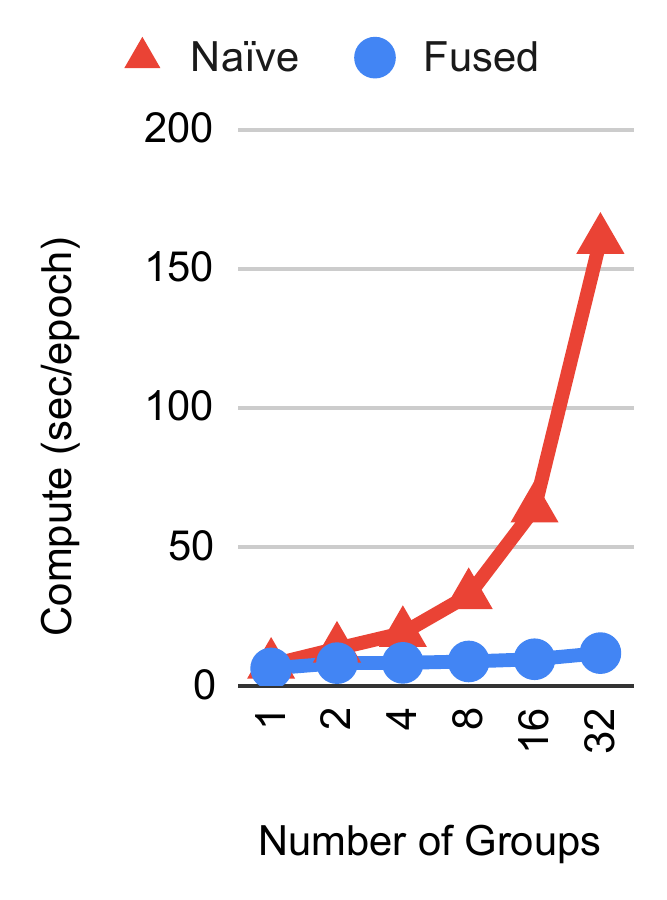}
    \end{minipage}
    \vspace{-0.05in}
    \caption{Memory and compute usage of naïve and fused implementation of replicated losses.}
    \vspace{-0.15in}
    \label{fig:efficiency}
    
\end{wrapfigure}

\paragraph{Efficient implementation of replicated losses.} Due to the design of feature aggregation and replicated losses, a naïve implementation of groups can be very inefficient in terms of both memory consumption and compute. However, each spatial group actually computes the same aggregated feature and loss function.
This means that it is possible to share most of the computation across loss functions when performing both backprop and forward gradient.
We implemented our custom JAX JVP/VJP functions~\citep{jax2018github} and observed significant memory savings and compute speed-ups for replicated losses, which would otherwise not be feasible to run on modern hardware. The results are reported in Figure~\ref{fig:efficiency}. \rebuttal{A code snippet is included in Appendix~\ref{app:fused}.}

\section{Experiments}
We compare our proposed algorithm to a set of alternatives: Backprop, Feedback Alignment and other global variants of Forward Gradient. Backprop is a biologically implausible oracle, since it computes true gradients whereas we compute noisy gradients. Feedback alignment computes approximate gradients by using a set of random backward weights. We explain each method below.

\paragraph{1) Backprop (BP).} 
\looseness=-1000
We include the standard backprop algorithm as well as its local variants. \textbf{Local Backprop (L-BP)} adds local losses as proposed, but still permits gradient to flow in an end-to-end fashion. \textbf{Local Greedy Backprop (LG-BP)} in addition adds stop gradient operators in between blocks. This is to provide a comparison to our methods by computing true local gradients. LG-BP is similar in spirit to recent local learning algorithms~\citep{belilovsky2018greedy,lowe2019gim}.

\paragraph{2) Feedback Alignment (FA).} The standard FA algorithm~\citep{lillicrap2016fa} adds a set of random and fixed backward weights. We assume that the gradients to normalization layers and activation functions are known since they do not have weight connections. Local Feedback Alignment (\textbf{L-FA}) adds local losses as proposed, but still permits error signals to flow back. Local Greedy Feedback Alignment (\textbf{LG-FA}) adds a stop gradient to prevent error signals from flowing back, similar to the backprop-free algorithm in \citet{nokland2019local}.
\begin{table}[t]
\ifarxiv
\else
\vspace{-0.35in}
\fi
\begin{center}
\resizebox{\textwidth}{!}{
\begin{small}
\begin{tabular}{l|c|c|c|c}
\toprule
\bf Dataset & \bf MNIST & \bf MNIST & \bf CIFAR-10 & \bf ImageNet \\ 
\bf Network & \bf S/1/1 & \bf M/1/16 & \bf M/8/16 & \bf L/32/64 \\
Metric  & Test / Train Err. (\%) & Test / Train Err. (\%) & Test / Train Err. (\%) & Test / Train Err. (\%) \\
\midrule
BP      & 2.66 / 0.00 & 2.41 / 0.00 & 33.62 / 0.00 & 36.82 / 14.69 \\
L-BP    & 2.38 / 0.00 & 2.16 / 0.00 & 30.75 / 0.00 & 42.38 / 22.80 \\
LG-BP   & 2.43 / 0.00 & 2.81 / 0.00 & 33.84 / 0.05 & 54.37 / 39.66 \\
\midrule
\multicolumn{5}{c}{BP-free algorithms} \\
\midrule
FA      & \textbf{2.82} / \textbf{0.00} & 2.90 / \textbf{0.00} & 39.94 / 28.44 & 94.55 / 94.13 \\
L-FA    & 3.21 / \textbf{0.00} & 2.90 / \textbf{0.00} & 39.74 / 28.98 & 87.20 / 85.69 \\
LG-FA   & 3.11 / \textbf{0.00} & \textbf{2.50} / \textbf{0.00} & 39.73 / 32.32 & 85.45 / 82.83 \\
DFA     & 3.31 / \textbf{0.00} & 3.17 / \textbf{0.00} & 38.80 / 33.69 & 91.17 / 90.28 \\
\midrule
FG-W    & 9.25 / 8.93 & 8.56 / 8.64 & 55.95 / 54.28 & 97.71 / 97.58 \\
FG-A    & 3.24 / 1.53 & 3.76 / 1.75 & 59.72 / 41.29 & 98.83 / 98.80 \\
LG-FG-W & 9.25 / 8.93 & 5.66 / 4.59 & 52.70 / 51.71 & 97.39 / 97.29 \\
LG-FG-A & 3.24 / 1.53 &	2.55 / \textbf{0.00}	& \textbf{30.68} / \textbf{19.39} &	\textbf{58.37} / \textbf{44.86}\\
\bottomrule
\end{tabular}
\end{small}
}
\end{center}
\vspace{-0.15in}
\caption{Supervised learning for image classification}
\label{tab:mainsup}
\vspace{-0.05in}
\end{table}

\begin{table}[t]
\ifarxiv
\else
\vspace{-0.1in}
\fi
\begin{center}
\resizebox{0.8\textwidth}{!}{
\begin{small}
\begin{tabular}{l|c|c|c}
\toprule
\bf Dataset  & \bf CIFAR-10 & \bf CIFAR-10 & \bf ImageNet \\
\bf Network  & \bf M/8/16   & \bf L/8/64   & \bf L/32/64  \\
Metric  & Test / Train Err. (\%) & Test / Train Err. (\%)  & Test / Train Err. (\%)  \\
\midrule
BP      & 24.11 / 21.08 & 17.53 / 13.35  & 55.66 / 49.79 \\
L-BP    & 24.69 / 21.80 & 19.13 / 13.60  & 59.11 / 52.50 \\
LG-BP   & 29.63 / 25.60 & 23.62 / 16.80  & 68.36 / 62.53 \\
\midrule
\multicolumn{4}{c}{BP-free algorithms} \\
\midrule
FA      & 45.87 / 44.06  & 67.93 / 65.32 & 82.86 / 80.21 \\
L-FA    & 37.73 / 36.13  & 31.05 / 26.97 & 83.18 / 79.80 \\
LG-FA   & 36.72 / 34.06  & 30.49 / 25.56 & 82.57 / 79.53 \\
DFA     & 46.09 / 42.76  & 39.26 / 37.17 & 93.51 / 92.51 \\
\midrule
FG-W    & 53.37 / 51.56 & 50.45 / 45.64 & 91.94 / 89.69 \\
FG-A    & 54.59 / 52.96 & 56.63 / 56.09 & 97.83 / 97.79 \\
LG-FG-W & 52.66 / 50.23 & 52.27 / 48.67 & 91.36 / 88.81 \\
LG-FG-A & \textbf{32.88 / 29.73} & \textbf{26.81 / 23.90} & \textbf{73.24} / \textbf{66.89} \\
\bottomrule
\end{tabular}
\end{small}
}
\end{center}
\vspace{-0.15in}
\caption{Self-supervised contrastive learning with linear readout}
\label{tab:maincon}
\vspace{-0.15in}
\end{table}

\paragraph{3) Forward Gradient (FG).} This family of methods comprises our proposed algorithm and related approaches. Weight-perturbed forward gradient (\textbf{FG-W}) was proposed by~\citet{baydin2022forward}. In this paper, we propose the activity perturbation variant (\textbf{FG-A}). We further add local objective functions, producing \textbf{LG-FG-W} and \textbf{LG-FG-A}, which stand for Local Greedy Forward Gradient Weight/Activity-Perturbed. For local perturbation to work, we have to add a stop gradient in between blocks so each perturbation has a single corresponding loss. We expect \textbf{LG-FG-A} to achieve the best performance among other variants because it can leverage the variance reduction benefit from both activity perturbation and local losses.

\paragraph{Datasets.}
We use standard image classification datasets to benchmark the learning algorithms. MNIST~\citep{lecun1998mnist} contains 70,000 28$\times$28 handwritten digit images of class 0-9. CIFAR-10~\citep{krizhevsky2009learning} contains 60,000 32$\times$32 natural images of 10 semantic classes. ImageNet~\citep{deng2009imagenet} contains 1.3 million natural images of 1000 classes, which we resized to 224$\times$224. For CIFAR-10 and ImageNet, we applied both supervised learning and contrastive learning. For MNIST, we applied supervised learning only. We designed different configurations of the LocalMixer architecture for each dataset, listed in Table~\ref{tab:localmixerconfig}.

\paragraph{Data augmentation.} For MNIST and CIFAR-10 supervised experiments, we do not apply data augmentation. Data augmentation on ImageNet follows the open source implementation by \citet{grill2020bootstrap}. Because forward gradient suffers from variance, we apply weaker augmentations for contrastive learning experiments, increasing the area lower bound for random crops from 0.08 to 0.3-0.5. We find that this change has relatively little effect on the performance of backprop.

\paragraph{Main results.} Our main results are shown in Table~\ref{tab:mainsup} and Table~\ref{tab:maincon}. 
In supervised experiments, there is almost no cost of introducing local greedy losses, and our local forward gradient method can match the test error of backprop on MNIST and CIFAR. Note that LG-FG-A fails to overfit the training set to 0\% error when trained without data augmentation. This suggests that variance could still be an issue. For CIFAR-10 contrastive learning, our method obtains an error rate approaching that obtained by backprop (26.81\% vs. 17.53\%), and most of the gap is due to greedy learning vs. gradient estimation (6.09\% vs. 3.19\%). On ImageNet, we achieve reasonable performance compared to backprop (58.37\% vs. 36.82\% for supervised and 73.24\% vs. 55.66\% for contrastive). However, we find that the error due to greediness grows as the problem gets more complex and requires more layers to cooperate. We significantly outperform the FA family on ImageNet (by 25\% for supervised and 10\% for contrastive). Interestingly, local greedy FA also performs better than global feedback alignment, which suggests that the benefit of local learning transfers to other types of gradient approximation. TP-based methods were evaluated in ~\citet{bartunov2018scalability} and were found to be worse than FA on ImageNet. In sum, although there is still some noticeable gap between our method and backprop, we have made a large stride forward compared to backprop-free algorithms. More results are included in the Appendix~\ref{app:additional-results}.

\begin{figure}[t]
\ifarxiv
\else
\vspace{-0.45in}
\fi
    \centering
     \begin{subfigure}[b]{0.29\textwidth}
    \includegraphics[width=\textwidth]{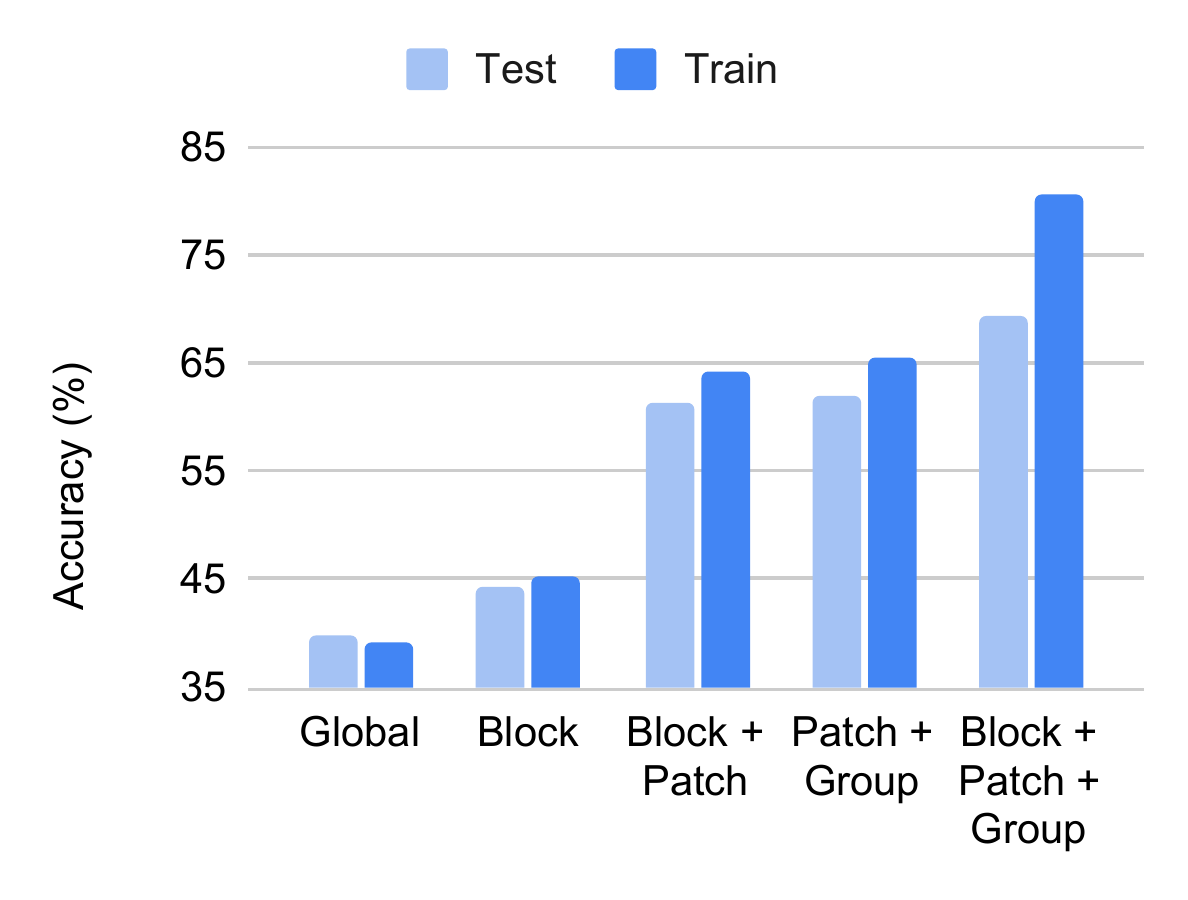}
    \caption{CIFAR-10 Supervised M/8}
    \end{subfigure}
    \begin{subfigure}[b]{0.29\textwidth}
    \includegraphics[width=\textwidth]{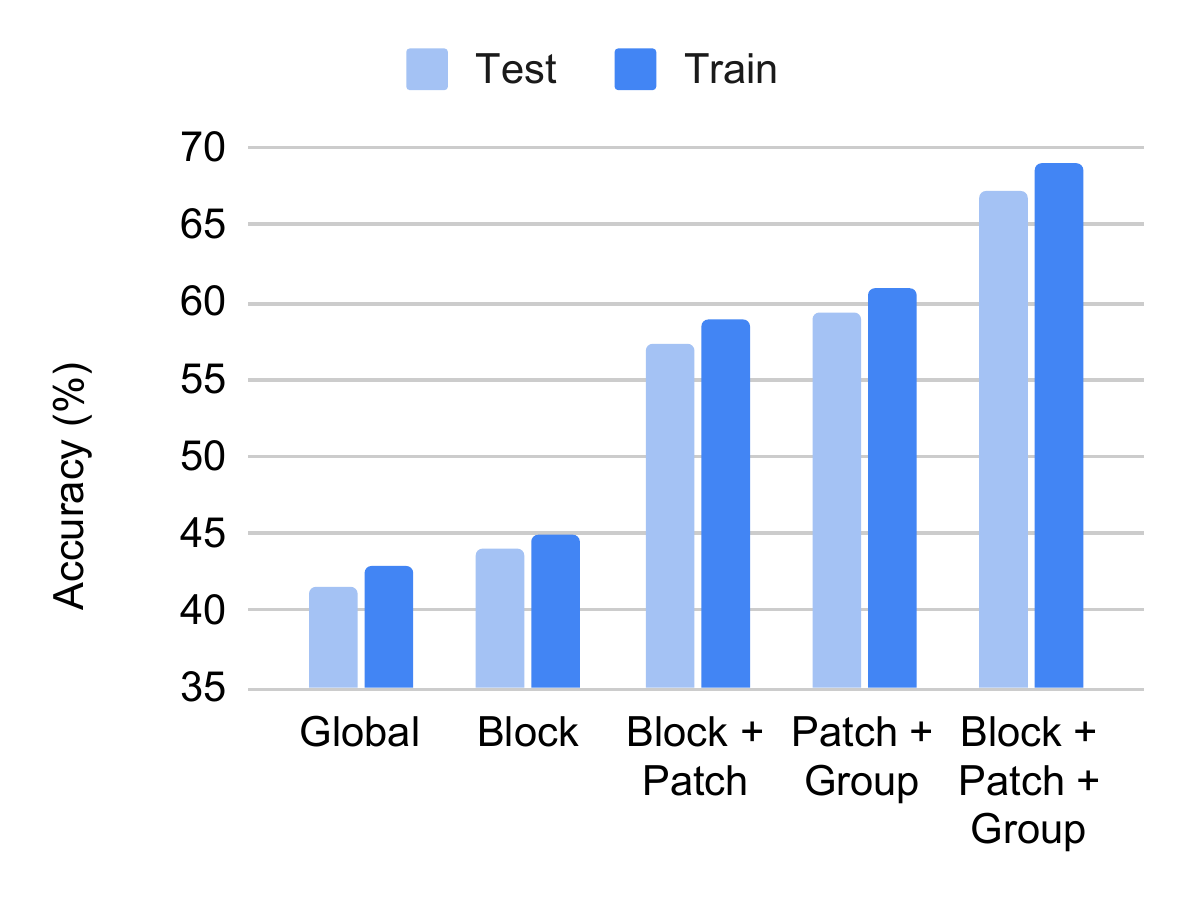}
    \caption{CIFAR-10 Contrastive M/8}
    \end{subfigure}
    \begin{subfigure}[b]{0.29\textwidth}
    \includegraphics[width=\textwidth]{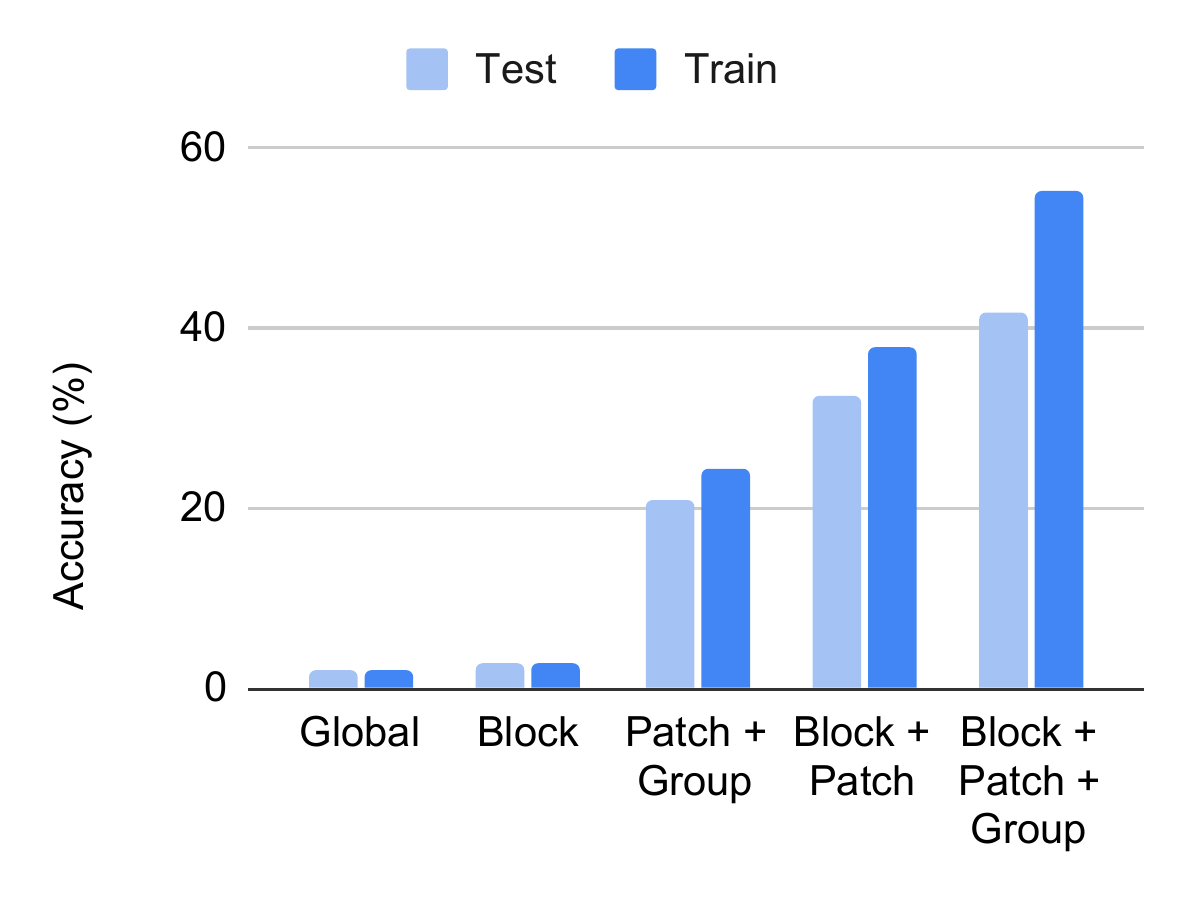}
    \caption{ImageNet Supervised L/32}
    \end{subfigure}
    \vspace{-0.05in}
    \caption{Effect of adding local losses at different locations on the performance of forward gradient}
    \vspace{-0.1in}
    \label{fig:localloss}
\end{figure}

\begin{figure}[t]
     \centering
     \begin{subfigure}[b]{0.22\textwidth}
         \centering
         \includegraphics[height=3cm,trim={0.5cm 0.5cm 3.2cm 0},clip]{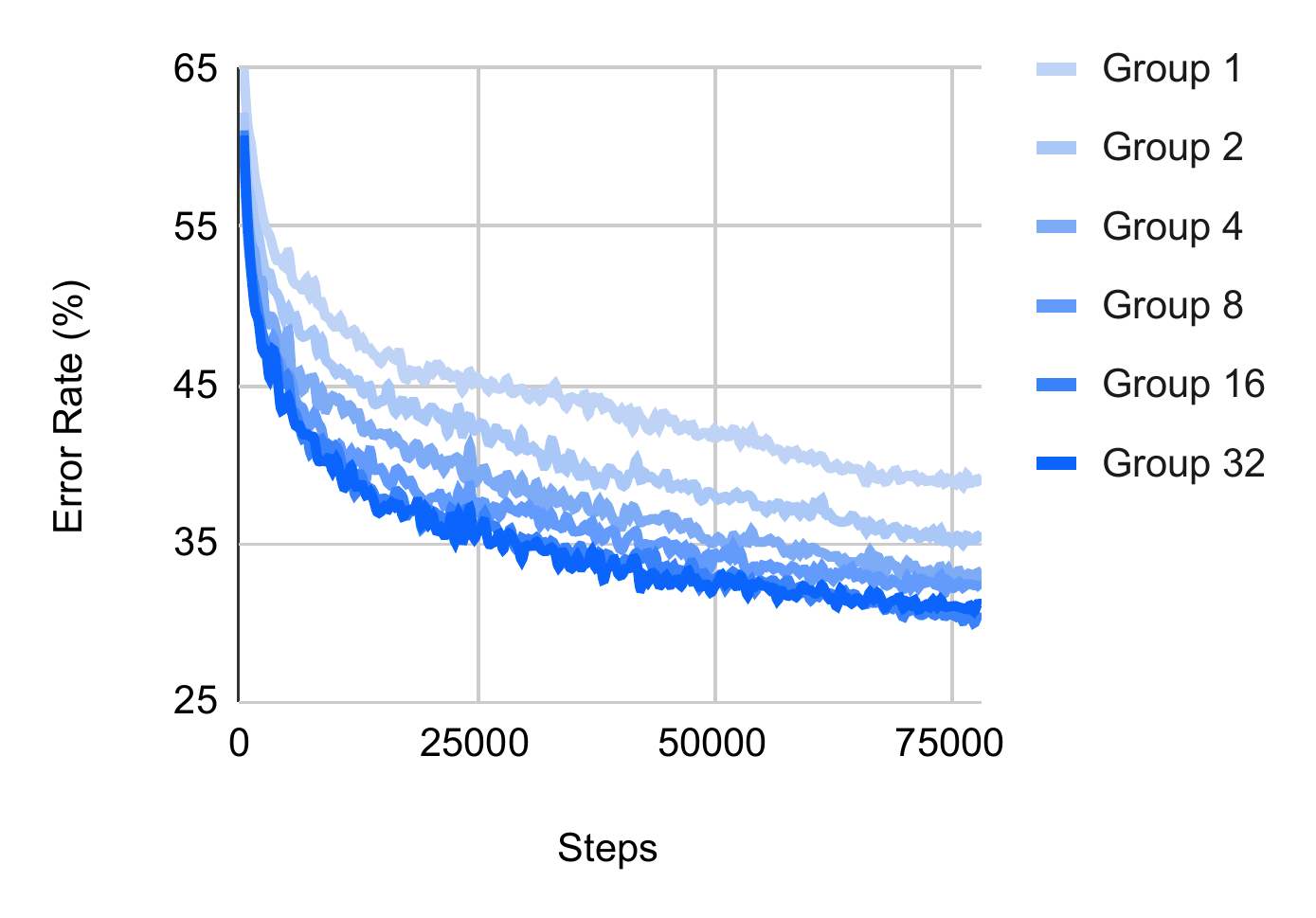}
         \caption{Supervised Test}
         \label{fig:sup-curve-test}
     \end{subfigure}
     \begin{subfigure}[b]{0.22\textwidth}
         \centering
         \includegraphics[height=3cm,trim={1.5cm 0.5cm 3.2cm 0},clip]{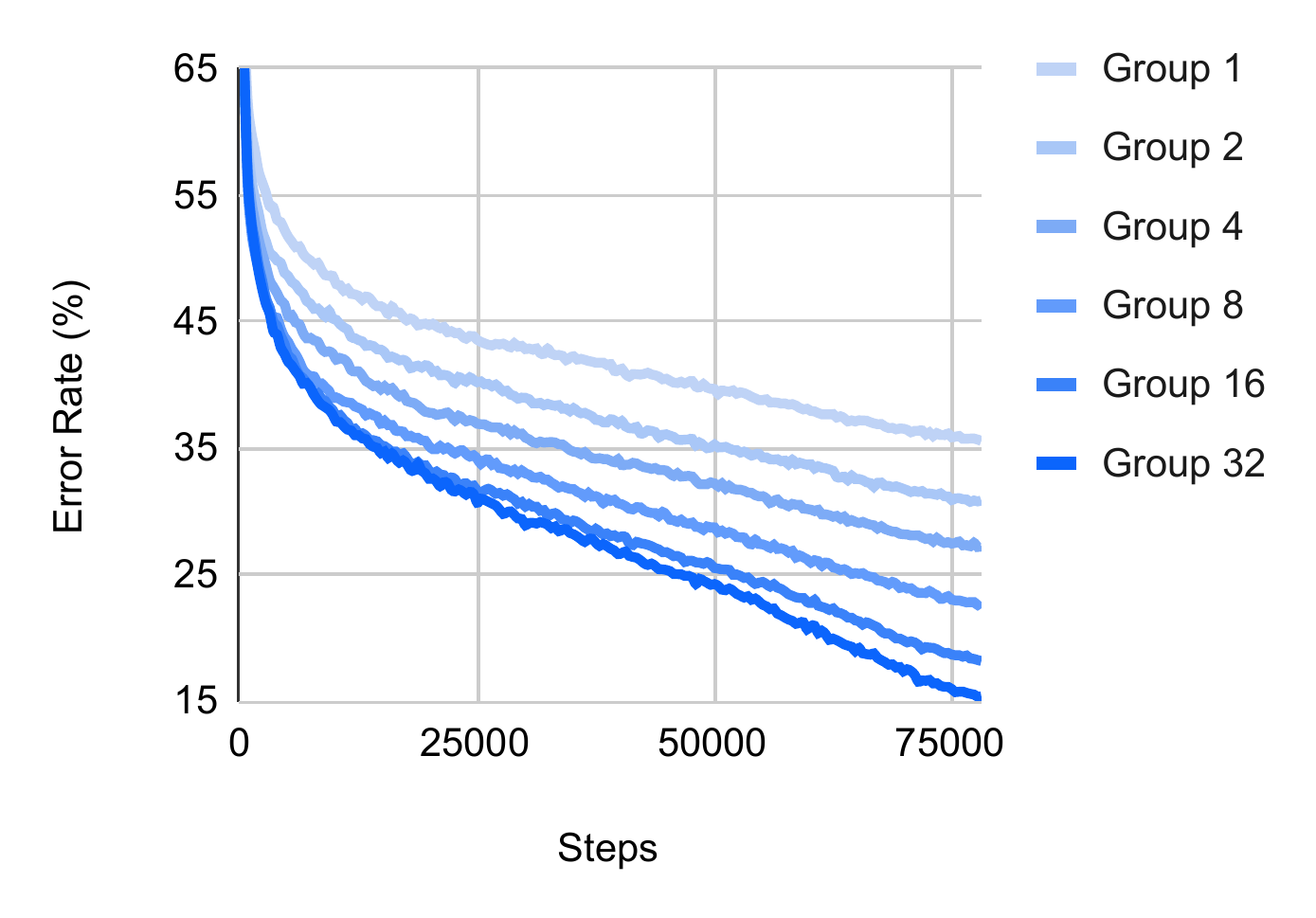}
         \caption{Supervised Train}
         \label{fig:sup-curve-train}
     \end{subfigure}
     \begin{subfigure}[b]{0.22\textwidth}
         \centering
         \includegraphics[height=3cm,trim={1.5cm 0.5cm 3.2cm 0},clip]{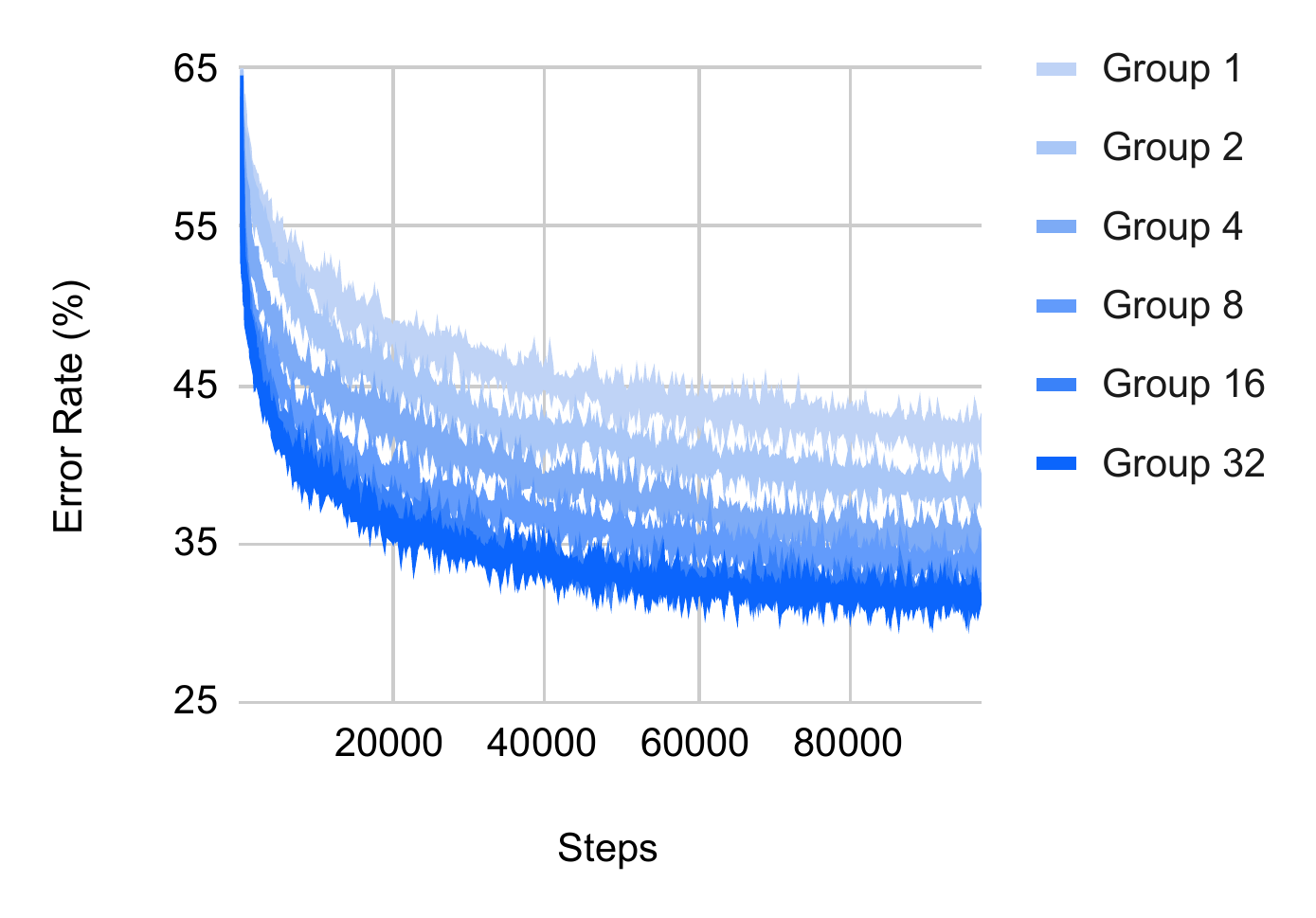}
         \caption{Contrastive Test}
         \label{fig:con-curve-test}
     \end{subfigure}
     \begin{subfigure}[b]{0.22\textwidth}
         \centering
         \includegraphics[height=3cm,trim={1.5cm 0.5cm 0 0},clip]{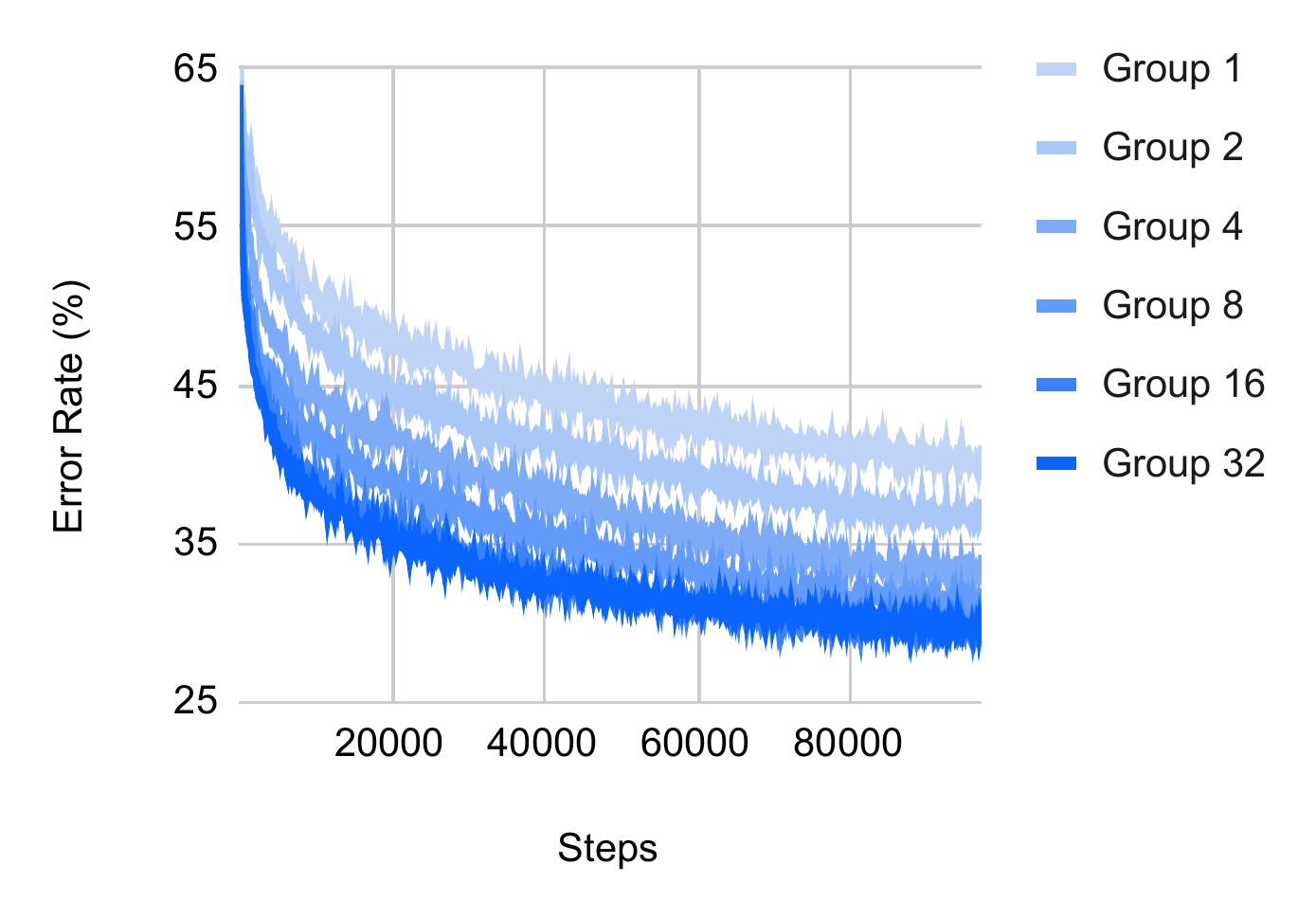}
         \caption{Contrastive Train}
     \end{subfigure}
    \vspace{-0.05in}
    \caption{Error rate of M/8/* during CIFAR-10 training using different number of groups.}
    \label{fig:diffgroups}
    \vspace{-0.15in}
\end{figure}

\paragraph{Effect of local losses.} In Figure~\ref{fig:localloss} we ablate the benefit of placing local losses at different locations: blockwise, patchwise and groupwise. A combination of all three is the strongest. Global perturbation learning fails to learn as the accuracy is similar to initializing with random weights.

\paragraph{Effect of groups.} In Figure~\ref{fig:diffgroups} we investigate the effect of different number of groups by showing the training curves. Adding more groups bring significant improvement to local perturbation learning in terms of lowering both training and test errors, but the effect vanishes around 8 channels / group.
\section{Conclusion}
It is often believed that perturbation-based learning cannot scale to large and deep networks. We show that this is to some extent true because the gradient estimation variance grows with the number of hidden dimensions for activity perturbation, and is even worse for shared weight perturbation. But more optimistically, we show that a huge number of local greedy losses can help forward gradient learning scale much better. We explored blockwise, patchwise, and groupwise local losses, and a combination of all three, with a total of a quarter of a million losses in one of the larger networks, performs the best. Local activity-perturbed forward gradient performs better than previous backprop-free algorithms on larger networks. The idea of local losses opens up opportunities for different loss designs and sheds light on the search for biologically plausible learning algorithms in the brain and alternative computing devices.
\section*{Acknowledgment}
We thank Timothy Lillicrap for his helpful feedback on our earlier draft.

\bibliography{ref}
\ifarxiv
\bibliographystyle{plainnat}
\else
\bibliographystyle{iclr2023_conference}
\fi
\newpage
\section{Proofs of Unbiasedness}
\label{app:unbias}
In this section, we show the unbiasedness of $g_w(w_{ij})$ and $g_a(w_{ij})$. The first proof was given by~\citet{baydin2022forward}.
\begin{proposition}
$g_w(w_{ij})$ is an unbiased gradient estimator if $\{v_{ij}\}$ are independent zero-mean uni-variance random variables~\citep{baydin2022forward}.
\end{proposition}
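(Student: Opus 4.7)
The plan is to compute $\mathbb{E}[g_w(w_{ij})]$ directly from the definition and show it equals $\nabla w_{ij}$, relying only on the two assumed moment properties of the perturbations, namely $\mathbb{E}[v_{i'j'}] = 0$ and $\mathbb{E}[v_{i'j'}^2] = 1$, together with the independence of distinct entries. Since the gradient components $\nabla w_{i'j'}$ are constants with respect to the randomness in $\{v_{ij}\}$, the whole calculation will be a one-line application of linearity of expectation followed by a substitution.

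Concretely, I would first write
\begin{align*}
\mathbb{E}[g_w(w_{ij})] = \mathbb{E}\!\left[\Bigl(\sum_{i'j'} \nabla w_{i'j'}\, v_{i'j'}\Bigr)\, v_{ij}\right] = \sum_{i'j'} \nabla w_{i'j'}\, \mathbb{E}[v_{i'j'}\, v_{ij}],
\end{align*}
pulling the constants out of the expectation and interchanging the finite sum with the expectation. Next I would evaluate $\mathbb{E}[v_{i'j'}\, v_{ij}]$ case by case: if $(i',j') = (i,j)$, it equals $\mathbb{E}[v_{ij}^2] = 1$ by the unit-variance assumption; otherwise, $v_{i'j'}$ and $v_{ij}$ are independent, so $\mathbb{E}[v_{i'j'}\, v_{ij}] = \mathbb{E}[v_{i'j'}]\,\mathbb{E}[v_{ij}] = 0$ by the zero-mean assumption. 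This collapses the sum to a single term and yields $\mathbb{E}[g_w(w_{ij})] = \nabla w_{ij}$, which is precisely the unbiasedness claim.

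There is no genuine obstacle here: the statement is essentially a second-moment identity for the identity covariance structure of $\{v_{ij}\}$, and the only thing to be careful about is to note that $\nabla w_{i'j'}$ depends on the forward activations and weights but not on the perturbation $\mathbf{v}$, so it is deterministic conditional on the data and may be extracted from the expectation. If one wished to be extra careful, one could phrase the calculation as $\mathbb{E}[g_w(w_{ij})] = \sum_{i'j'} \nabla w_{i'j'}\, \delta_{(i',j'),(i,j)} = \nabla w_{ij}$ using Kronecker-delta notation to make the collapse of the double sum transparent.
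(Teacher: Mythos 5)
Your proof is correct and follows essentially the same route as the paper's: separate the diagonal term $\nabla w_{ij} v_{ij}^2$ from the cross terms, then apply $\mathbb{E}[v_{ij}^2]=1$ and $\mathbb{E}[v_{ij}v_{i'j'}]=0$ for distinct indices. The only cosmetic difference is that you keep everything inside a single sum with a case analysis on $\mathbb{E}[v_{i'j'}v_{ij}]$ rather than splitting the sum first, which changes nothing of substance.
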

\begin{proof}
We can rewrite the weight perturbation estimator as
\begin{align}
    g_w(w_{ij}) = \left(\sum_{i'j'} \nabla w_{i'j'} v_{i'j'} \right) v_{ij} = \nabla w_{ij} v_{ij}^2 + \sum_{i'j' \neq ij} \nabla w_{i'j'} v_{ij} v_{i'j'}.
\end{align}
Note that since each dimension of $v$ is an independent zero-mean uni-variance random variable, $\Exp[v_{ij}] = 0$, $\Exp[v^2_{ij}] = \textrm{Var}[v_{ij}] + \Exp[v_{ij}]^2 = 1 + 0 = 1$, and $\Exp[v_{ij} v_{i'j'}] = 0$ if $ij \neq i'j'$.
\begin{align}
    \Exp[g_w(w_{ij})] &= \Exp\left[\nabla w_{ij} v_{ij}^2\right] + \Exp\left[\sum_{i'j' \neq ij} \nabla w_{i'j'} v_{ij} v_{i'j'} \right]\\
    &= \nabla w_{ij} \Exp\left[v_{ij}^2\right] +  \sum_{i'j' \neq ij} \nabla w_{i'j'} \Exp\left[ v_{ij} v_{i'j'} \right]\\
    &= \nabla w_{ij} \cdot 1 + \sum_{i'j' \neq ij} \nabla w_{i'j'} \cdot 0 \\
    &= \nabla w_{ij}.
\end{align}
\end{proof}

\begin{proposition}
$g_a(w_{ij})$ is an unbiased gradient estimator if $\{u_{j}\}$ are independent zero-mean uni-variance random variables.
\end{proposition}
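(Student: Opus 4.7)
The plan is to mirror the proof of the previous proposition, using the chain-rule identity $\nabla w_{ij} = x_i \nabla z_j$ (since $z_j = \sum_{i} w_{ij} x_i$ implies $\partial z_j/\partial w_{ij} = x_i$) to reduce the claim to an expectation over the activity perturbations $\{u_j\}$. Here $x_i$ is treated as deterministic with respect to the perturbation noise on this layer, because the perturbation is applied at the output $z_j$ and the pre-synaptic activity $x_i$ is produced by the forward pass before this perturbation is introduced.

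First I would rewrite the estimator as
\begin{align}
g_a(w_{ij}) = x_i\!\left(\sum_{j'} \nabla z_{j'}\, u_{j'}\right) u_j = x_i\, \nabla z_j\, u_j^2 + x_i \sum_{j' \neq j} \nabla z_{j'}\, u_{j'}\, u_j,
\end{align}
splitting off the diagonal term $j' = j$ from the off-diagonal cross terms. This is the exact analog of the algebraic reorganization used in the weight-perturbed proof, with the outer sum collapsed because only the single factor $u_j$ (rather than $v_{ij}$) multiplies the inner product.

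Next I would take the expectation term by term. Since $\{u_j\}$ are independent, zero-mean and unit-variance, we have $\Exp[u_j^2] = 1$ and $\Exp[u_j u_{j'}] = 0$ for $j' \neq j$. Linearity of expectation, together with the fact that $x_i$ and the true partials $\nabla z_{j'}$ are constants with respect to $\mathbf{u}$, then yields
\begin{align}
\Exp[g_a(w_{ij})] = x_i\, \nabla z_j \cdot 1 + x_i \sum_{j' \neq j} \nabla z_{j'} \cdot 0 = x_i\, \nabla z_j = \nabla w_{ij},
\end{align}
which is the desired unbiasedness statement.

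There is essentially no serious obstacle; the only subtlety worth stating explicitly in the write-up is the justification that $x_i$ and $\nabla z_{j'}$ may be pulled out of the expectation. This is where one has to appeal to the setup in Section 3.3: the perturbation $\mathbf{u}$ is injected at the post-synaptic pre-activation of the current layer, so the pre-synaptic activity $x_i$ is independent of $\mathbf{u}$, and the true partial $\nabla z_{j'}$ is a deterministic function of the unperturbed forward state. With that clarification, the proof is a one-line computation parallel to the weight-perturbation case.
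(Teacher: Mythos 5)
Your proof is correct and follows essentially the same route as the paper: split off the diagonal term $u_j^2$ from the cross terms $u_j u_{j'}$, use the identity $\nabla w_{ij} = x_i \nabla z_j$, and apply $\Exp[u_j^2]=1$ and $\Exp[u_j u_{j'}]=0$ for $j'\neq j$. The only difference is cosmetic — the paper converts $x_i\nabla z_{j'}$ to $\nabla w_{ij'}$ before taking the expectation, while you take the expectation first — so no substantive comparison is needed.
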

\begin{proof}
The true gradient to the weights $\nabla w_{ij}$ is the product between $x_j$ and $\nabla z_k$. Therefore, we can rewrite the weight perturbation estimator as
\begin{align}
    g_a(w_{ij}) &= x_i \left(\sum_{j'} \nabla z_{j'} u_{j'} \right) u_{j} = x_j \nabla z_{j} u_{j}^2 + x_i \left(\sum_{j' \neq j} \nabla z_{j'} u_{j'}\right) u_{j}\\
    &= x_i \nabla z_{j} u_{j}^2 + \left(\sum_{j' \neq j} x_i \nabla z_{j'} u_{j'}\right) u_{j}\\
    &= \nabla w_{ij} u_{j}^2 + \left(\sum_{j' \neq j} \nabla w_{ij'} u_{j'}\right) u_{j}.
\end{align}
Since each dimension of $u$ is an independent zero-mean uni-variance random variable, $\Exp[u_{j}] = 0$, $\Exp[u^2_{j}] = \textrm{Var}[u_{j}] + \Exp[u_{j}]^2 = 1 + 0 = 1$, and $\Exp[u_{j} u_{j'}] = 0$ if $j \neq j'$.
\begin{align}
    \Exp[g_a(w_{ij})] &= \Exp\left[\nabla w_{ij} u_{j}^2 + \left(\sum_{j' \neq j} \nabla w_{ij'} u_{j'}\right) u_{j} \right]\\
    &= \nabla w_{ij} \Exp\left[u_{j}^2\right] +  \sum_{j' \neq j} \nabla w_{ij'} \Exp\left[ u_{j} u_{j'} \right]\\
    &= \nabla w_{ij} \cdot 1 + \sum_{j' \neq j} \nabla w_{ij'} \cdot 0 \\
    &= \nabla w_{ij}.
\end{align}
\end{proof}

\section{Proofs of Variances}
\label{app:variance}
We followed~\citet{wen2018flipout} and show that the variance of the gradient estimators can be decomposed.
\begin{lemma} The variance of the gradient estimator can be decomposed into three parts:
$\Var \left(g(w_{ij}) | x \right) = Z_1 + Z_2 + Z_3$, where 
$Z_1 = \frac{1}{N} V_1 \Var_x\left( \nabla w_{ij} \rvert_{x} \right)$,
$Z_2 = \frac{1}{N} \Exp_x\left[ \Var_v \left( \left. g(w_{ij}) \right\rvert x \right) \right]$,\\
$Z_3 = \frac{1}{N^2} \Exp_B \left[ \sum_{x^{(n)} \in B}\sum_{x^{(m)} \in B\setminus\{x^{(n)}\}} \Cov_v (\left. g(w_{ij}) \right\rvert x^{(n)}, \left. g(w_{ij}) \right\rvert x^{(m)})\right]$.
\end{lemma}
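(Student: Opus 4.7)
\section*{Proof plan for the variance decomposition lemma}

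The object of interest is the per–mini-batch gradient estimator, which I interpret as the average $\bar{g}(w_{ij}) = \frac{1}{N} \sum_{x^{(n)} \in B} g(w_{ij})\,\lvert\, x^{(n)}$, where the randomness is taken jointly over the sample of the batch $B$ and the perturbation variables $v$ (or $u$). The plan is to use the law of total variance, conditioning on the batch $B$, which naturally splits the total variance into a ``data noise'' term and a ``perturbation noise'' term; the latter further splits via bilinearity of covariance into a diagonal piece (the per-example perturbation variance) and an off-diagonal piece (cross-example covariances from any \emph{shared} perturbation).

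First I would write
\begin{align*}
\Var(\bar g) \;=\; \Var_B\!\bigl(\Exp_v[\bar g \mid B]\bigr) \;+\; \Exp_B\!\bigl[\Var_v(\bar g \mid B)\bigr].
\end{align*}
For the first summand, the unbiasedness results proved in Appendix~\ref{app:unbias} give $\Exp_v[g(w_{ij}) \mid x^{(n)}] = \nabla w_{ij}\rvert_{x^{(n)}}$, so $\Exp_v[\bar g \mid B] = \frac{1}{N}\sum_n \nabla w_{ij}\rvert_{x^{(n)}}$. Since the $x^{(n)}$ are drawn i.i.d., the variance of this sample average over $B$ equals $\frac{1}{N}\Var_x\bigl(\nabla w_{ij}\rvert_x\bigr)$, which matches the stated form of $Z_1$ (with the prefactor $V_1$ absorbing any constant from the proposition's convention).

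Next I would expand the second summand by bilinearity of covariance:
\begin{align*}
\Var_v(\bar g \mid B) \;=\; \frac{1}{N^2} \sum_{n,m} \Cov_v\!\bigl(g(w_{ij})\rvert x^{(n)},\, g(w_{ij})\rvert x^{(m)}\bigr).
\end{align*}
Splitting the double sum into the diagonal $n=m$ terms and the off-diagonal $n\neq m$ terms gives respectively $\frac{1}{N^2}\sum_n \Var_v(g\rvert x^{(n)})$ and $\frac{1}{N^2}\sum_{n\neq m}\Cov_v(g\rvert x^{(n)}, g\rvert x^{(m)})$. Taking $\Exp_B$ of the diagonal piece and using i.i.d. sampling reduces the sum over $n$ to $N \cdot \Exp_x[\Var_v(g\rvert x)]$, giving exactly $Z_2 = \frac{1}{N}\Exp_x[\Var_v(g\rvert x)]$. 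The expectation of the off-diagonal piece is precisely $Z_3$ as written.

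The only subtlety I anticipate is bookkeeping about which randomness each operator integrates over: the $\Var_v(\cdot \mid x^{(n)})$ must treat the data point as fixed while averaging the perturbation $v$, whereas $\Cov_v(\cdot \mid x^{(n)}, \cdot \mid x^{(m)})$ reduces to zero whenever the perturbations across the batch are independent but is nonzero when the perturbation is shared across examples (as in standard mini-batch implementations). This is the same distinction that produces the ``shared'' versus ``independent'' columns in Table~\ref{tab:variance}, so the decomposition here is exactly what is needed downstream to derive those entries by plugging in the per-example variances and covariances for $g_w$ and $g_a$ respectively. Beyond this clarification, the argument is a direct application of the total-variance identity and bilinearity of covariance, and I do not expect any analytic obstacle.
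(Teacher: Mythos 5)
Your proposal is correct and follows essentially the same route as the paper: the law of total variance conditioned on the batch $B$, with the first term reducing to $Z_1$ via unbiasedness and i.i.d.\ sampling, and the second term split by bilinearity of covariance into the diagonal ($Z_2$) and off-diagonal ($Z_3$) pieces. Your reading of the stray $V_1$ factor in the statement of $Z_1$ as a typographical artifact is also consistent with the paper's own derivation, which yields $Z_1 = \frac{1}{N}\Var_x\left(\nabla w_{ij}\rvert_x\right)$ with no extra prefactor.
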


\begin{proof}
By the law of total variance,
\begin{align}
\Var \left(g(w_{ij}) \right) &= 
\Var_B\left(\Exp_v\left[g(w_{ij}) | B \right] \right) +  \Exp_B\left[\Var_v(g(w_{ij}) | B) \right].
\end{align}
The first term comes from the gradient variance from data sampling, and it vanishes as batch size grows:
\begin{align}
& \Var_B\left(\Exp_v \left[g(w_{ij}) | B \right] \right)\\
=& \Var_B\left(\Exp_v \left[\frac{1}{N} \sum_{x^{(n)} \in B} g(w_{ij}) | x^{(n)} \right] \right)\\
=& \frac{1}{N^2} \Var_B\left(\Exp_v \left[\sum_{x^{(n)} \in B} g(w_{ij}) | x^{(n)} \right] \right)\\
=& \frac{1}{N^2} \Var_B\left(\sum_{x^{(n)} \in B} \Exp_v \left[ g(w_{ij}) | x^{(n)} \right] \right)\\
=& \frac{1}{N^2} \Var_B\left(\sum_{x^{(n)} \in B} \nabla w_{ij} \rvert_{x^{(n)}} \right)\\
=& \frac{1}{N^2} \sum_n \Var_x \left(\nabla w_{ij} \rvert_{x} \right) = \frac{1}{N} \Var_x\left( \nabla w_{ij} \rvert_{x} \right) = Z_1.
\end{align}
The second term comes from the gradient estimation variance:
\begin{align}
& \Exp_B\left[\Var_v\left( g(w_{ij}) | B\right) \right] \\
= &
\Exp_B\left[\Var_v \left( \left. \frac{1}{N} \sum_{x^{(n)} \in B} g(w_{ij}) \right\rvert x^{(n)} \right) \right]\\
= &
\Exp_B\left[ \frac{1}{N^2} \Var_v \left( \left. \sum_{x^{(n)} \in B} g(w_{ij}) \right\rvert x^{(n)} \right) \right]\\
= &
\Exp_B\left[ \frac{1}{N^2} \sum_{x^{(n)} \in B} \Var_v \left( \left. g(w_{ij}) \right\rvert x^{(n)} \right) \right. + 
\left. \sum_{x^{(n)} \in B}\sum_{x^{(m)} \in B\setminus\{x^{(n)}\}} \Cov_v (\left. g(w_{ij}) \right\rvert x^{(n)}, \left. g(w_{ij}) \right\rvert x^{(m)})
\right]\\
= &
\frac{1}{N} \Exp_x\left[ \Var_v \left( \left. g(w_{ij}) \right\rvert x \right) \right] + \frac{1}{N^2} \Exp_B \left[ \sum_{x^{(n)} \in B}\sum_{x^{(m)} \in B\setminus\{x^{(n)}\}} \Cov_v (\left. g(w_{ij}) \right\rvert x^{(n)}, \left. g(w_{ij}) \right\rvert x^{(m)})\right]\\
= & Z_2 + Z_3.
\label{eq:estimationvar}
\end{align}
\end{proof}

\begin{remark}
$Z_2$ is the variance of the gradient estimator in the deterministic case, and $Z_3$ measures the correlation between different gradient estimation within the batch. The $Z_3$ is zero if the perturbations are independent, and non-zero if the perturbations are shared within the mini-batch.
\end{remark}

\begin{proposition}
Let $p \times q$ be the size of the weight matrix, the element-wise average variance of the weight perturbed gradient estimator with a batch size $N$ is $\frac{pq+2}{N}V + (pq+1)S$ if the perturbations are shared across the batch, and $\frac{pq+2}{N}V + \frac{pq+1}{N}S$ if they are independent, where $V$ is the element-wise average variance of the true gradient, and $S$ is the element-wise average squared gradient.
\end{proposition}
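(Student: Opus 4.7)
I would invoke the variance decomposition Lemma~1 with $g = g_w$ and evaluate $Z_1$, $Z_2$, $Z_3$ separately, then average over the $pq$ weight indices $(i,j)$. The perturbations $\{v_{ij}\}$ are iid standard Gaussian, so the Gaussian moment identities $\Exp[v_\alpha^2]=1$, $\Exp[v_\alpha^4]=3$, $\Exp[v_\alpha^3 v_\beta]=0$, $\Exp[v_\alpha^2 v_\beta^2]=1$ for $\alpha\neq\beta$ are the only facts I will need about $v$.

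\textbf{Step 1 ($Z_1$).} By the lemma $Z_1 = \frac{1}{N}\Var_x(\nabla w_{ij}|_x)$. Since $V$ is defined as the per-dimension average of $\Var_x(\nabla w_{ij}|_x)$, averaging $Z_1$ over $(i,j)$ gives exactly $V/N$, independent of whether the perturbations are shared.

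\textbf{Step 2 ($Z_2$).} Conditioned on $x$, expand
\[
g_w(w_{ij})^2 = v_{ij}^2 \sum_{i'j'}\sum_{i''j''} \nabla w_{i'j'}\, \nabla w_{i''j''}\, v_{i'j'} v_{i''j''}.
\]
When we take $\Exp_v[\cdot]$, the Gaussian moment identities kill every cross term $(i'j')\neq(i''j'')$; the diagonal $(i'j')=(i''j'')$ contributes $1$ if it differs from $(ij)$ and $3$ if it equals $(ij)$. This yields $\Exp_v[g_w(w_{ij})^2|x] = 2(\nabla w_{ij}|_x)^2 + \|\nabla w|_x\|^2$. Subtracting the square of the mean $\nabla w_{ij}|_x$ gives
\[
\Var_v(g_w(w_{ij})|x) = (\nabla w_{ij}|_x)^2 + \|\nabla w|_x\|^2.
\]
Taking $\Exp_x$ and averaging over the $pq$ choices of $(i,j)$ produces $(pq+1)\cdot\frac{1}{pq}\Exp_x[\|\nabla w|_x\|^2]$. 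By the standard bias-variance decomposition applied dimension by dimension, $\frac{1}{pq}\Exp_x[\|\nabla w|_x\|^2] = V + S$ with $S = \|\Exp_x[\nabla w|_x]\|^2/(pq)$. Hence the averaged $Z_2$ equals $(pq+1)(V+S)/N$.

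\textbf{Step 3 ($Z_3$).} Under independent perturbations, the conditional covariance across samples is zero and $Z_3 = 0$. Under shared perturbation, the same Gaussian-moment bookkeeping as in Step~2 applied to $\Exp_v[g_w|_{x^{(n)}}\cdot g_w|_{x^{(m)}}]$ yields $2\,\nabla w_{ij}|_{x^{(n)}}\nabla w_{ij}|_{x^{(m)}} + \langle \nabla w|_{x^{(n)}}, \nabla w|_{x^{(m)}}\rangle$, and subtracting the product of means gives the covariance $\nabla w_{ij}|_{x^{(n)}}\nabla w_{ij}|_{x^{(m)}} + \langle \nabla w|_{x^{(n)}}, \nabla w|_{x^{(m)}}\rangle$. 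Because $x^{(n)}$ and $x^{(m)}$ are iid, $\Exp_B$ replaces products of per-sample gradients by squares of the expected gradient, so averaging over the $N(N-1)$ ordered pairs and dividing by $N^2$ produces $\frac{N-1}{N}\bigl((\Exp_x\nabla w_{ij})^2 + \|\Exp_x\nabla w\|^2\bigr)$. Averaging over $(i,j)$ then uses $\|\Exp_x\nabla w\|^2/(pq) = S$, giving averaged $Z_3 = \frac{(N-1)(pq+1)}{N}S$.

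\textbf{Step 4 (Combine).} Summing the averaged $Z_1+Z_2+Z_3$: in the shared case, $\frac{V}{N} + \frac{(pq+1)(V+S)}{N} + \frac{(N-1)(pq+1)S}{N}$ collapses to $\frac{pq+2}{N}V + (pq+1)S$ once the $S$-terms telescope over the $N$ in the denominator; in the independent case only the first two pieces remain, giving $\frac{pq+2}{N}V + \frac{pq+1}{N}S$. The main obstacle is the bookkeeping in Steps~2 and~3: keeping track of when the summation indices $(i'j'),(i''j'')$ coincide with $(ij)$ so that the $\Exp[v_{ij}^4]=3$ contribution correctly produces the ``$2(\nabla w_{ij})^2 + \|\nabla w\|^2$'' form rather than accidentally double-counting, and subsequently identifying the averages over $(i,j)$ with $V$ and $S$ via the bias-variance relation $\frac{1}{pq}\Exp_x\|\nabla w\|^2 = V+S$.
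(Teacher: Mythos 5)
Your proposal is correct and follows essentially the same route as the paper: the Lemma's $Z_1+Z_2+Z_3$ decomposition, Gaussian fourth-moment bookkeeping giving $\Var_v(g_w\mid x)=(\nabla w_{ij})^2+\|\nabla w\|^2$ and the analogous shared-perturbation covariance for $Z_3$, then averaging over the $pq$ entries via $\frac{1}{pq}\Exp_x\|\nabla w\|^2=V+S$. Your norm notation and the bias--variance identity merely compress the paper's more explicit term-by-term expansion.
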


\begin{proof}
We first derive $Z_2$.
\begin{align}
Z_2 =& \frac{1}{N} \Exp_x\left[ \Var_v \left( \left. g_w(w_{ij}) \right\rvert x \right) \right]\\
=& \frac{1}{N} \Exp_x \left[ \Var_v \left( \left(\sum_{i'j'} \nabla w_{i'j'} v_{i'j'} \right) v_{ij}  \right) \right]\\
=& \frac{1}{N} \Exp_x \left[ \Var_v \left( \nabla w_{ij} v_{ij}^2 + \sum_{i'j' \neq ij} \nabla w_{i'j'} v_{ij} v_{i'j'} \right) \right]\\
=& \frac{1}{N} \Exp_x \left[ \Var_v \left( \nabla w_{ij} v_{ij}^2 \right) + \Var_v \left( \sum_{i'j' \neq ij} \nabla w_{i'j'} v_{ij} v_{i'j'} \right) + \right. \left. 2 \Cov_v \left( \nabla w_{ij} v_{ij}^2, \sum_{i'j' \neq ij} \nabla w_{ij} v_{ij} v_{i'j'} \right) \right]\\
=& \frac{1}{N} \Exp_x \left[\Var \left( \nabla w_{ij} v_{ij}^2 \right) + \Var_v \left( \sum_{i'j' \neq ij} \nabla w_{i'j'} v_{ij} v_{i'j'} \right) + \right. \\
& \left. 2 \Exp_v \left[ \sum_{i'j' \neq ij} \nabla w_{ij} \nabla w_{i'j'} v_{ij}^3 v_{i'j'} \right] - 2 \Exp_v \left[ \nabla w_{ij} v_{ij}^2 \right] \Exp_v \left[
\sum_{i'j' \neq ij} \nabla w_{i'j'} v_{ij} v_{i'j'}\right] \right]\\
=& \frac{1}{N} \Exp_x \left[ \nabla w_{ij}^2 \Var_v \left( v_{ij}^2 \right) + \Var_v \left( \sum_{i'j' \neq ij} \nabla w_{i'j'} v_{ij} v_{i'j'} \right) + \right. \\
& \left. 2 \sum_{i'j' \neq ij} \nabla w_{ij} \nabla w_{i'j'} \Exp_v \left[ v_{ij}^3 v_{i'j'} \right] - 2  \nabla w_{ij} \Exp_v \left[v_{ij}^2 \right] 
\left( \sum_{i'j' \neq ij} \nabla w_{i'j'} \Exp_v \left[v_{ij} v_{i'j'}\right] \right) \right]\\
=& \frac{1}{N} \Exp_x \left[ \nabla w_{ij}^2 \Var_v \left( v_{ij}^2 \right) + \Var_v \left( \sum_{i'j' \neq ij} \nabla w_{i'j'} v_{ij} v_{i'j'} \right) + \right. \\
& \left . 2 \sum_{i'j' \neq ij} \nabla w_{ij} \nabla w_{i'j'} \cdot 0 - 2 \nabla w_{ij} \cdot 1 
\left( \sum_{i'j' \neq ij} \nabla w_{i'j'} \cdot 0 \right)\right]\\
=& \frac{1}{N} \Exp_x \left[ \nabla w_{ij}^2 \Var_v \left( v_{ij}^2 \right) + \Var_v \left( \sum_{i'j' \neq ij} \nabla w_{i'j'} v_{ij} v_{i'j'} \right) \right] \\
=& \frac{1}{N} \Exp_x \left[ \nabla w_{ij}^2 \cdot (\Exp[v_{ij}^4] - \Exp_v[v_{ij}^2]^2) + \sum_{i'j' \neq ij} \Var_v \left( \nabla w_{i'j'} v_{ij} v_{i'j'} \right) \right]\\
=& \frac{1}{N} \Exp_x \left[ \nabla w_{ij}^2 (3 \Var_v [v_{ij}]^2 - \Exp_v[v_{ij}^2]^2) + \sum_{i'j' \neq ij} \nabla w_{i'j'}^2 \Var_v \left( v_{ij} v_{i'j'} \right) \right] \\
=& \frac{1}{N} \Exp_x \left[2 \nabla w_{ij}^2 \right] + \frac{1}{N} \Exp_x \left[\sum_{i'j' \neq ij} \nabla w_{i'j'}^2 (\Var_v [v_{ij}] + \Exp_v[v_{i'j'}]^2)(\Var_v [v_{i'j'}] + \Exp_v[v_{i'j'}]^2) - \Exp_v[v_{ij}]^2 \Exp_v[v_{i'j'}]^2 \right]
\end{align}
\begin{align}
=& \frac{2}{N} \overline{\nabla w}_{ij}^2 + \frac{2}{N} \Var_x(\nabla w_{ij} \rvert x) + \frac{1}{N} \Exp_x \left[\sum_{i'j' \neq ij} \nabla w_{i'j'}^2 \Var_v (v_{ij}) \Var_v (v_{i'j'}) \right]\\
=& \frac{2}{N} \overline{\nabla w}_{ij}^2 + \frac{2}{N} \Var_x(\nabla w_{ij} \rvert x) + \frac{1}{N} \sum_{i'j' \neq ij} \Exp_x \left[\nabla w_{i'j'}^2\right]\\
=& \frac{1}{N} \left[ \overline{\nabla w}_{ij}^2 + \Var_x(\nabla w_{ij} \rvert x) + \sum_{i'j'} \left( \overline{\nabla w}_{i'j'}^2 
 + \Var_x(\nabla w_{i'j'} \rvert x) \right)
\right].
\end{align}

$Z_3$ is nonzero if the perturbations are shared within a batch. Assuming that the perturbations are shared,
\begin{align}
Z_3 =& \frac{1}{N^2} \Exp_B \left[ \sum_{x^{(n)} \in B}\sum_{x^{(m)} \in B\setminus\{x^{(n)}\}}
\Cov_v (\left. g_w(w_{ij}) \right\rvert x^{(n)}, \left. g_w(w_{ij}) \right\rvert x^{(m)})
\right]\\
= & \frac{1}{N^2} \Exp_B \left[
\sum_{x^{(n)} \in B}\sum_{x^{(m)} \in B\setminus\{x^{(n)}\}}
\Exp_v\left[
\left. g_w(w_{ij}) \right\rvert x^{(n)} \left. g_w(w_{ij}) \right\rvert x^{(m)}
\right]
- \Exp_v\left[\left. g_w(w_{ij}) \right\rvert x^{(n)}\right] \Exp_v \left[\left. g_w(w_{ij}) \right\rvert x^{(m)}\right]
\right]\\
= & \frac{1}{N^2} \Exp_B \left[
\sum_{x^{(n)} \in B}\sum_{x^{(m)} \in B\setminus\{x^{(n)}\}}
\Exp_v\left[
\left. g_w(w_{ij}) \right\rvert x^{(n)} \left. g_w(w_{ij}) \right\rvert x^{(m)}
\right]
- \nabla w_{ij} \rvert x^{(n)} \nabla w_{ij} \rvert x^{(m)}
\right]\\
= & \frac{1}{N^2} \Exp_B \left[
\sum_{x^{(n)} \in B}\sum_{x^{(m)} \in B\setminus\{x^{(n)}\}}
\Exp_v\left[
 \left(\sum_{i'j'} \nabla w_{i'j'} \rvert x^{(n)} v_{i'j'} \right) v_{ij} \left(\sum_{i'j'} \nabla w_{i'j'} \rvert x^{(m)}  v_{i'j'} \right) v_{ij}
\right]
- \right. \\
& \left. \nabla w_{ij} \rvert x^{(n)} \nabla w_{ij} \rvert x^{(m)}
\right]\\
= & \frac{1}{N^2} \Exp_B \left[
\sum_{x^{(n)} \in B}\sum_{x^{(m)} \in B\setminus\{x^{(n)}\}}
\Exp_v\left[
 \left( \nabla w_{ij} \rvert x^{(n)} v_{ij}^2 + \sum_{i'j'\neq ij} \nabla w_{i'j'} \rvert x^{(n)} v_{i'j'} v_{ij} \right) \right.\right.\\
& \left. \left.
 \left(\nabla w_{ij} \rvert x^{(m)} v_{ij}^2 + \sum_{i'j'\neq ij} \nabla w_{i'j'} \rvert x^{(m)}  v_{i'j'} v_{ij} \right)
\right]
\right] - \\
& 
\frac{1}{N^2} \Exp_B \left[
\sum_{x^{(n)} \in B}\sum_{x^{(m)} \in B\setminus\{x^{(n)}\}} \nabla w_{ij} \rvert x^{(n)} \nabla w_{ij} \rvert x^{(m)}
\right]\\
= & \frac{1}{N^2} \Exp_B \left[
\sum_{x^{(n)} \in B}\sum_{x^{(m)} \in B\setminus\{x^{(n)}\}}
\Exp_v\left[ 
 \nabla w_{ij} \rvert x^{(n)} \nabla w_{ij} \rvert x^{(m)} v_{ij}^4 + \right. \right.  \\
 & \nabla w_{ij} \rvert x^{(n)} v_{ij}^2 \sum_{i'j'\neq ij} \nabla w_{i'j'} \rvert x^{(m)}  v_{i'j'} v_{ij} + \nabla w_{ij} \rvert x^{(m)} v_{ij}^2 \sum_{i'j'\neq ij} \nabla w_{i'j'} \rvert x^{(n)} v_{i'j'} v_{ij} + \\
 &
 \left. \left. 
 \sum_{i'j'\neq ij} \nabla w_{i'j'} \rvert x^{(n)} v_{i'j'} v_{ij}
 \sum_{i'j'\neq ij} \nabla w_{i'j'} \rvert x^{(m)}  v_{i'j'} v_{ij}
\right]
\right] - \\
& 
\frac{1}{N^2} \left( \Exp_{x^{(n)}} \Exp_{x^{(m)}} \left[
\sum_n \sum_{m \neq n} \nabla w_{ij} \rvert x^{(n)} \nabla w_{ij} \rvert x^{(m)}
\right] \right)\end{align}
\begin{align}
 = & \frac{1}{N^2} \Exp_B \left[
\sum_{x^{(n)} \in B}\sum_{x^{(m)} \in B\setminus\{x^{(n)}\}}
\Exp_v\left[
 \nabla w_{ij} \rvert x^{(n)} \nabla w_{ij} \rvert x^{(m)} v_{ij}^4 + \right. \right. \\
 &
 \left. \left.
 \left( \sum_{i'j'\neq ij} \nabla w_{i'j'} \rvert x^{(n)} v_{i'j'}\right)
 \left( \sum_{i'j'\neq ij} \nabla w_{i'j'} \rvert x^{(m)}  v_{i'j'}\right) v_{ij}^2
\right]
\right] - \frac{1}{N^2} \left(
\sum_n \sum_{m \neq n} \overline{\nabla w}_{ij}^2 \right)\\
= & \frac{1}{N^2} \Exp_B \left[
\sum_{x^{(n)} \in B}\sum_{x^{(m)} \in B\setminus\{x^{(n)}\}}
\Exp_v\left[
 \nabla w_{ij} \rvert x^{(n)} \nabla w_{ij} \rvert x^{(m)} v_{ij}^4 + \right. \right. \\
 &
 \left. \left.
 \sum_{i'j'\neq ij}\nabla w_{i'j'} \rvert x^{(n)} \nabla w_{i'j'} \rvert x^{(m)} v_{i'j'}^2v_{ij}^2 + \sum_{i'j' \neq ij} \sum_{i''j'' \neq ij, i'j'} \nabla w_{i'j'} \rvert x^{(n)} \nabla w_{i'j'} \rvert x^{(m)} v_{i'j'}v_{i''j''} v_{ij}^2
\right]
\right] - \\
& \frac{1}{N^2} \left(
\sum_n \sum_{m \neq n} \overline{\nabla w}_{ij}^2 \right)\\
= & \frac{1}{N^2} \Exp_B \left[
\sum_{x^{(n)} \in B}\sum_{x^{(m)} \in B\setminus\{x^{(n)}\}}
\Exp_v\left[
 \nabla w_{ij} \rvert x^{(n)} \nabla w_{ij} \rvert x^{(m)} v_{ij}^4 + \sum_{i'j'\neq ij}\nabla w_{i'j'} \rvert x^{(n)} \nabla w_{i'j'} \rvert x^{(m)} v_{i'j'}^2v_{ij}^2
\right]
\right] - \\
&\frac{1}{N^2} \left(
\sum_n \sum_{m \neq n} \overline{\nabla w}_{ij}^2 \right)\\
= & \frac{1}{N^2} \Exp_B \left[
\sum_{x^{(n)} \in B}\sum_{x^{(m)} \in B\setminus\{x^{(n)}\}}
 \nabla w_{ij} \rvert x^{(n)} \nabla w_{ij} \rvert x^{(m)} \Exp_v\left[v_{ij}^4\right] + \right. \\
 & \left. \sum_{i'j'\neq ij}\nabla w_{i'j'} \rvert x^{(n)}\nabla w_{i'j'} \rvert x^{(m)} \Exp_v\left[v_{i'j'}^2\right]\Exp_v\left[v_{ij}^2\right]
\right] - \frac{1}{N^2} \left(
\sum_n \sum_{m \neq n} \overline{\nabla w}_{ij}^2 \right)\\
= & \frac{1}{N^2} \Exp_B \left[
\sum_{x^{(n)} \in B}\sum_{x^{(m)} \in B\setminus\{x^{(n)}\}}
 3 \nabla w_{ij} \rvert x^{(n)} \nabla w_{ij} \rvert x^{(m)} + \sum_{i'j'\neq ij}\nabla w_{i'j'} \rvert x^{(n)}\nabla w_{i'j'} \rvert x^{(m)}
\right] - \\
& \frac{1}{N^2} \left(
\sum_n \sum_{m \neq n} \overline{\nabla w}_{ij}^2 \right)
\\
= & \frac{1}{N^2} \left[
\sum_n \sum_{m \neq n}
\left(
 3 \Exp_x\left[ \nabla w_{ij} \rvert x\right]^2 + \sum_{i'j'\neq ij} \Exp_x \left[\nabla w_{i'j'} \rvert x\right]^2 \right)
 \right]
 -  
\frac{1}{N^2} \left(
\sum_n \sum_{m \neq n} \overline{\nabla w}_{ij}^2 \right)\\
= & \frac{1}{N^2} \left[
\sum_n \sum_{m \neq n}
\left(
 3 \overline{\nabla w}_{ij}^2 + \sum_{i'j'\neq ij} \overline{\nabla w}_{i'j'}^2 \right) 
 \right] - \frac{1}{N^2} \left(\sum_n \sum_{m \neq n} \overline{\nabla w}_{ij}^2 \right)\\
= & \frac{1}{N^2} \sum_n \sum_{m \neq n} \left(
 2 \overline{\nabla w}_{ij}^2 + \sum_{i'j'\neq ij} \overline{\nabla w}_{i'j'}^2 \right) =  \frac{N(N-1)}{N^2} \left( \overline{\nabla w}_{ij}^2 + \sum_{i'j'} \overline{\nabla w}_{i'j'}^2 \right).
\end{align}

Lastly, we average the variance across all weight dimensions:
\begin{align}
\mVar(g_w(w_{ij})) = & \frac{1}{pq} \sum_{ij} \Var(g_w(w_{ij})) \\
= & \frac{1}{pq} \sum_{ij} \left\{ Z_1 + Z_2 + Z_3 \right\} \\
= & \frac{1}{pq} \sum_{ij} \left\{ \frac{1}{N} \Var_x\left( \nabla w_{ij} \rvert_{x} \right) + \right. \\
 & \frac{1}{N} \left[ \overline{\nabla w}_{ij}^2 + \Var_x(\nabla w_{ij} \rvert x) + \sum_{i'j'} \left( \overline{\nabla w}_{i'j'}^2 
 + \Var_x(\nabla w_{i'j'} \rvert x) \right) \right] + \\
 & \left. \frac{N(N-1)}{N^2} \left( \overline{\nabla w}_{ij}^2 + \sum_{i'j'} \overline{\nabla w}_{i'j'}^2 \right) \right\}\\
= & \frac{1}{pq} \sum_{ij} \left\{ \frac{1}{N} \Var_x\left( \nabla w_{ij} \rvert_{x} \right) + \right. \\
 & \frac{1}{N} \left[ \Var_x(\nabla w_{ij} \rvert x) + \sum_{i'j'} \Var_x(\nabla w_{i'j'} \rvert x) \right] + \left. \left(
  \overline{\nabla w}_{ij}^2 + \sum_{i'j'} \overline{\nabla w}_{i'j'}^2 \right) \right\}\\
= & \frac{2}{N} \mVar \left( \nabla w \right) + \frac{pq}{N} \mVar \left( \nabla w \right) + (pq+1) \mSqNorm(\overline{\nabla w}) \\
= & \frac{pq+2}{N} V + (pq+1)S.
\end{align}

If the perturbations are independent, we show that $Z_3$ is 0.
\begin{align}
Z_3 =& \frac{1}{N^2} \Exp_B \left[ \sum_{x^{(n)} \in B}\sum_{x^{(m)} \in B \setminus \{x^{(n)}\}} \Cov_v (\left. g_w(w_{ij}) \right\rvert x^{(n)}, \left. g_w(w_{ij}) \right\rvert x^{(m)})
\right]\\
= & \frac{1}{N^2} \Exp_B \left[
\sum_{x^{(n)} \in B}\sum_{x^{(m)} \in B \setminus \{x^{(n)}\}}
\Exp_v\left[
\left. g_w(w_{ij}) \right\rvert x^{(n)} \left. g_w(w_{ij}) \right\rvert x^{(m)}
\right]
- \Exp_v\left[\left. g_w(w_{ij}) \right\rvert x^{(n)}\right] \Exp_v \left[\left. g_w(w_{ij}) \right\rvert x^{(m)}\right]
\right]\\
= & \frac{1}{N^2} \Exp_B \left[
\sum_{x^{(n)} \in B}\sum_{x^{(m)} \in B \setminus \{x^{(n)}\}}
\Exp_v\left[
\left. g_w(w_{ij}) \right\rvert x^{(n)} \left. g_w(w_{ij}) \right\rvert x^{(m)}
\right]
- \nabla w_{ij} \rvert x^{(n)} \nabla w_{ij} \rvert x^{(m)}
\right]\\
= & \frac{1}{N^2} \Exp_B \left[
\sum_{x^{(n)} \in B}\sum_{x^{(m)} \in B \setminus \{x^{(n)}\}}
\Exp_v\left[
 \left(\sum_{j'} \nabla w_{ij'} \rvert x^{(n)} v^{(n)}_{i'j'} \right) v_{ij}^{(n)} \left(\sum_{j'} \nabla w_{ij'} \rvert x^{(m)}  v^{(m)}_{i'j'} \right) v_{ij}^{(m)}
\right] \right. \\
& \left. - \nabla w_{ij} \rvert x^{(n)} \nabla w_{ij} \rvert x^{(m)}
\right]\\
= & \frac{1}{N^2} \Exp_B \left[
\sum_{x^{(n)} \in B}\sum_{x^{(m)} \in B \setminus \{x^{(n)}\}}
\Exp_v
 \left[ \sum_{j'} \sum_{j''} \nabla w_{ij'} \rvert x^{(n)}\nabla w_{ij''} \rvert x^{(m)} v^{(n)}_{i'j'} v^{(m)}_{i''j''} v^{(n)}_{ij} v^{(m)}_{ij} \right] \right]  - \\
& 
\frac{1}{N^2} \Exp_B \left[
\sum_{x^{(n)} \in B}\sum_{x^{(m)} \in B \setminus \{x^{(n)}\}} \nabla w_{ij} \rvert x^{(n)} \nabla w_{ij} \rvert x^{(m)}
\right]
\end{align}
\begin{align}
= & \frac{1}{N^2} \Exp_B \left[
\sum_{x^{(n)} \in B}\sum_{x^{(m)} \in B \setminus \{x^{(n)}\}}
\Exp_v\left[ 
 \nabla w_{ij} \rvert x^{(n)} \nabla w_{ij} \rvert x^{(m)} v_{ij}^{(n)2} v_{ij}^{(m)2} + \right. \right.  \\
 & \nabla w_{ij} \rvert x^{(n)} v_{ij}^{(n)2} v_{ij}^{(m)} \sum_{i'j'\neq j} \nabla w_{ij'} \rvert x^{(m)}  v_{i'j'}^{(m)} +  
   \nabla w_{ij} \rvert x^{(m)} v_{ij}^{(m)2} v_{ij}^{(n)} \sum_{i'j'\neq j} \nabla w_{ij'} \rvert x^{(n)} v_{i'j'}^{(n)} + \\
 & \sum_{i'j'\neq ij} \nabla w_{ij'} \rvert x^{(m)} \nabla w_{ij'} \rvert x^{(n)} v_{i'j'}^{(m)} v_{i'j'}^{(n)} v_{ij}^{(m)} v_{ij}^{(n)} + \\
 &
 \left. \left. 
 \sum_{i'j'\neq ij} \sum_{i''j'' \notin \{ij, j'j'\}} \nabla w_{i'j'} \rvert x^{(n)} v_{i'j'} \nabla w_{ij'} \rvert x^{(m)}  v^{(n)}_{i'j'} v^{(m)}_{i''j''} v^{(n)}_{ij} v^{(m)}_{ij}
\right]
\right] - \\
& 
\frac{1}{N^2} \left(\Exp_{x^{(n)}} \Exp_{x^{(m)}} \left[
\sum_n \sum_{m \neq n} \nabla w_{ij} \rvert x^{(n)} \nabla w_{ij} \rvert x^{(m)}
\right] \right)\\
= & \frac{1}{N^2} \Exp_B \left[
\sum_{x^{(n)} \in B}\sum_{x^{(m)} \in B \setminus \{x^{(n)}\}}
\Exp_v\left[ 
 \nabla w_{ij} \rvert x^{(n)} \nabla w_{ij} \rvert x^{(m)} v_{ij}^{(n)2} v_{ij}^{(m)2} + \right. \right. \\
& \left.\left.
 \sum_{i'j'\neq ij} \nabla w_{i'j'} \rvert x^{(m)} \nabla w_{i'j'} \rvert x^{(n)} v_{i'j'}^{(m)} v_{i'j'}^{(n)} v_{ij}^{(m)} v_{ij}^{(n)} \right] \right] - \\
& 
\frac{1}{N^2} \left( \Exp_{x^{(n)}} \Exp_{x^{(m)}} \left[
\sum_n \sum_{m \neq n} \nabla w_{ij} \rvert x^{(n)} \nabla w_{ij} \rvert x^{(m)}
\right] \right)\\
= & \frac{1}{N^2} \Exp_B \left[
\sum_{x^{(n)} \in B}\sum_{x^{(m)} \in B \setminus \{x^{(n)}\}}
\Exp_v\left[ 
 \nabla w_{ij} \rvert x^{(n)} \nabla w_{ij} \rvert x^{(m)} v_{ij}^{(n)2} v_{ij}^{(m)2} \right]\right] - \\
& 
\frac{1}{N^2} \left( \sum_n \sum_{m \neq n} \overline{\nabla w}_{ij}^2 \right)\\
= & \frac{1}{N^2} \Exp_B \left[
\sum_{x^{(n)} \in B}\sum_{x^{(m)} \in B \setminus \{x^{(n)}\}}
 \nabla w_{ij} \rvert x^{(n)} \nabla w_{ij} \rvert x^{(m)} \Exp_v \left[v_{ij}^{(n)2}\right] \Exp_v \left[ v_{ij}^{(m)2} \right] \right] - \\
& \frac{1}{N^2} \left( \sum_n \sum_{m \neq n} \overline{\nabla w}_{ij}^2 \right)\\
= & \frac{1}{N^2} \Exp_B \left[
\sum_{x^{(n)} \in B}\sum_{x^{(m)} \in B \setminus \{x^{(n)}\}}
\nabla w_{ij} \rvert x^{(n)} \nabla w_{ij} \rvert x^{(m)} \right] - \frac{1}{N^2} \left( \sum_n \sum_{m \neq n} \overline{\nabla w}_{ij}^2 \right)\\
= & \frac{1}{N^2} \left(
\sum_n \sum_{m \neq n} \Exp_x\left[ \nabla w_{ij} \rvert x\right]^2 \right) -
\frac{1}{N^2} \left(
\sum_n \sum_{m \neq n} \overline{\nabla w}_{ij}^2 \right)\\
= & 0.
\end{align}

Then the average variance becomes:
\begin{align}
  & \mVar(g_w(w_{ij})) = \frac{1}{pq} \sum_{ij} \Var(g_w(w_{ij})) \\
= & \frac{1}{pq} \sum_{ij} \left\{ Z_1 + Z_2 + Z_3 \right\} \\
= & \frac{1}{pq} \sum_{ij} \left\{ \frac{1}{N} \Var_x\left( \nabla w_{ij} \rvert_{x} \right) + \right. \\
  & \frac{1}{N} \left[ \overline{\nabla w}_{ij}^2 + \Var_x(\nabla w_{ij} \rvert x) + \sum_{i'j'} \left( \overline{\nabla w}_{i'j'}^2  + \Var_x(\nabla w_{i'j'} \rvert x) \right) \right] \\
= & \frac{pq+2}{N} \mVar \left( \nabla w \right) + \frac{pq+1}{N} \mSqNorm(\overline{\nabla w}) \\
= & \frac{pq+2}{N} V + \frac{pq+1}{N}S.
\end{align}
\end{proof}

\begin{proposition}
Let $p \times q$ be the size of the weight matrix, the element-wise average variance of the activity perturbed gradient estimator with a batch size $N$ is $\frac{q+2}{N}V + (q+1)S$ if the perturbations are shared across the batch, and $\frac{q+2}{N}V + \frac{q+1}{N}S$ if they are independent, where $V$ is the element-wise average variance of the true gradient, and $S$ is the element-wise average squared gradient.
\end{proposition}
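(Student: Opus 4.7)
The plan is to follow the same variance decomposition as in the weight perturbation proof, invoking the earlier lemma that writes $\Var(g_a(w_{ij})) = Z_1 + Z_2 + Z_3$, and to exploit the key structural difference: for activity perturbation the random vector $\mathbf{u}$ lives in $\mathbb{R}^q$ rather than $\mathbb{R}^{p\times q}$, so only perturbations sharing the same output index $j$ will produce non-vanishing cross terms. Using the identity $\nabla w_{ij} = x_i \nabla z_j$, I would first rewrite the estimator as
\begin{align}
g_a(w_{ij}) = \nabla w_{ij}\, u_j^2 + \sum_{j'\neq j} \nabla w_{ij'}\, u_j u_{j'},
\end{align}
so that each per-sample variance and each cross-sample covariance reduces to an expectation of a product of at most four $u$'s from a zero-mean unit-variance iid family.

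For $Z_1 = \tfrac{1}{N}\Var_x(\nabla w_{ij}\rvert x)$ the formula is identical to the weight-perturbation case since it does not depend on the perturbation scheme. For $Z_2$, expanding $\Var_u$ of the two-term expression above and using $\Exp[u_j^4]=3$, $\Exp[u_j^2 u_{j'}^2]=1$ for $j'\neq j$, and vanishing of all odd-moment terms, I expect
\begin{align}
Z_2 = \tfrac{1}{N}\,\Exp_x\!\left[\nabla w_{ij}^2 + \sum_{j'} \nabla w_{ij'}^2\right],
\end{align}
which differs from the weight case only in that the inner sum ranges over the $q$ outputs sharing index $i$, not over all $pq$ weights. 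For $Z_3$ under shared perturbations, I would expand the product $g_a(w_{ij})\rvert x^{(n)} \cdot g_a(w_{ij})\rvert x^{(m)}$, observe that only the $u_j^4$ diagonal and the $u_j^2 u_{j'}^2$ cross-diagonals survive (all other terms contain an odd power of some $u$), and subtract $\nabla w_{ij}\rvert x^{(n)}\,\nabla w_{ij}\rvert x^{(m)}$ to get a covariance of $2\,\nabla w_{ij}\rvert x^{(n)}\nabla w_{ij}\rvert x^{(m)} + \sum_{j'\neq j}\nabla w_{ij'}\rvert x^{(n)}\nabla w_{ij'}\rvert x^{(m)}$. Taking expectations over the independent draws of $x^{(n)},x^{(m)}$ and summing over the $N(N-1)$ ordered pairs yields $Z_3 = \tfrac{N-1}{N}\bigl[\overline{\nabla w}_{ij}^2 + \sum_{j'} \overline{\nabla w}_{ij'}^2\bigr]$. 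For independent perturbations, each $u^{(n)}$ is drawn separately across the batch, so all cross-sample moments factor through $\Exp[u_j^{(n)}]=0$ and $Z_3=0$, exactly as in the weight-perturbation proof.

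The final step is the averaging over weight indices. Using the definitions $V=\tfrac{1}{pq}\sum_{ij}\Var_x(\nabla w_{ij})$ and $S=\tfrac{1}{pq}\sum_{ij}\overline{\nabla w}_{ij}^2$, the inner sum $\sum_{j'}$ contributes an extra factor of $q$ after averaging over $j$ (since $\tfrac{1}{pq}\sum_{ij}\sum_{j'}(\cdot)_{ij'}=\tfrac{q}{pq}\sum_{ij'}(\cdot)_{ij'}$), whereas the analogous double sum $\sum_{i'j'}$ in the weight-perturbation proof produced a factor of $pq$. Combining the three terms then yields $\tfrac{q+2}{N}V+(q+1)S$ in the shared case and $\tfrac{q+2}{N}V+\tfrac{q+1}{N}S$ in the independent case, which is precisely the claim.

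The main obstacle is purely bookkeeping: making sure, in the $Z_3$ expansion, that one correctly tracks which $u_j$ factors come from which sample, and that the asymmetric role of the $j$ index relative to $i$ propagates correctly through the final average (so that one obtains $q$ and not $pq$ or $p$). No genuinely new ideas beyond the weight-perturbation argument are needed.
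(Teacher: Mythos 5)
Your proposal is correct and follows essentially the same route as the paper's proof: the same $Z_1+Z_2+Z_3$ decomposition, the same rewriting $g_a(w_{ij}) = \nabla w_{ij} u_j^2 + \sum_{j'\neq j}\nabla w_{ij'} u_j u_{j'}$, the same fourth-moment computations for $Z_2$ and the shared-perturbation $Z_3$, and the same index-counting in the final average that turns the inner sum over $j'$ into a factor of $q$ rather than $pq$. Your stated $Z_3 = \tfrac{N-1}{N}\bigl[\overline{\nabla w}_{ij}^2 + \sum_{j'}\overline{\nabla w}_{ij'}^2\bigr]$ (sum over all $j'$) is in fact the version consistent with the final formula, so no gap remains.
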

\begin{proof}
\begin{align}
 Z_2 =& \frac{1}{N} \Exp_x \left[ \Var_u \left( \left. g_a(w_{ij}) \right\rvert x \right) \right]\\
=& \frac{1}{N} \Exp_x \left[ \Var_u \left( \left(\sum_{j'} \nabla w_{ij'} u_{j'} \right) u_{j}  \right) \right]\\
=& \frac{1}{N} \Exp_x \left[ \Var_u \left( \nabla w_{ij} u_{j}^2 + \sum_{j' \neq j} \nabla w_{j'} u_{j} u_{j'} \right) \right]\\
=& \frac{1}{N} \Exp_x \left[ \Var_u \left( \nabla w_{ij} u_{j}^2 \right) + \Var_u \left( \sum_{j' \neq j} \nabla w_{ij'} u_{j} u_{j'} \right) + \right. \\
& \left. 2 \Cov_u \left( \nabla w_{ij} u_{j}^2, \sum_{j' \neq j} \nabla w_{ij} u_{j} u_{j'} \right) \right]\\
=& \frac{1}{N} \Exp_x \left[\Var_u \left( \nabla w_{ij} u_{j}^2 \right) + \Var_u \left( \sum_{i'j' \neq ij} \nabla w_{ij'} u_{j} u_{j'} \right) + \right. \\
& \left. 2 \Exp_u \left[ \sum_{j' \neq j} \nabla w_{ij} \nabla w_{ij'} u_{j}^3 u_{j'} \right] - 2 \Exp_u \left[ \nabla w_{ij} u_{j}^2 \right] \Exp_u \left[
\sum_{j' \neq j} \nabla w_{ij'} u_{j} u_{j'}\right] \right]\\
=& \frac{1}{N} \Exp_x \left[ \nabla w_{ij}^2 \Var_u \left( u_{j}^2 \right) + \Var_u \left( \sum_{j' \neq j} \nabla w_{ij'} u_{j} u_{j'} \right) + \right. \\
& \left. 2 \sum_{j' \neq j} \nabla w_{ij} \nabla w_{ij'} \Exp_u \left[ u_{j}^3 u_{j'} \right] - 2  \nabla w_{ij} \Exp_u \left[u_{j}^2 \right] 
\left( \sum_{j' \neq j} \nabla w_{ij'} \Exp_u \left[u_{j} u_{j'}\right] \right) \right]
\end{align}
\begin{align}
=& \frac{1}{N} \Exp_x \left[ \nabla w_{ij}^2 \Var_u \left( u_{j}^2 \right) + \Var_u \left( \sum_{j' \neq j} \nabla w_{ij'} u_{j} u_{j'} \right) + \right. \\
& \left. 2 \sum_{j' \neq j} \nabla w_{ij} \nabla w_{ij'} \cdot 0 - 2 \nabla w_{ij} \cdot 1 
\left( \sum_{j' \neq j} \nabla w_{i'j'} \cdot 0 \right)\right]\\
=& \frac{1}{N} \Exp_x \left[ \nabla w_{ij}^2 \Var_u \left( u_{j}^2 \right) + \Var_u \left( \sum_{j' \neq j} \nabla w_{ij'} u_{j} u_{j'} \right) \right] \\
=& \frac{1}{N} \Exp_x \left[ \nabla w_{ij}^2 \cdot (\Exp_u[u_{j}^4] - \Exp_u[u_{j}^2]^2) + \sum_{j' \neq j} \Var_u \left( \nabla w_{ij'} u_{j} u_{j'} \right) \right]\\
=& \frac{1}{N} \Exp_x \left[ \nabla w_{ij}^2 (3 \Var_u (u_{j})^2 - \Exp_u[u_{j}^2]^2) + \sum_{j' \neq j} \nabla w_{j'}^2 \Var_u \left( u_{j} u_{j'} \right) \right] \\
=& \frac{1}{N} \Exp_x \left[2 \nabla w_{ij}^2 \right] + \\
&\frac{1}{N} \Exp_x \left[\sum_{j' \neq j} \nabla w_{ij'}^2 (\Var_u [u_{j}] + \Exp_u[u_{j'}]^2)(\Var_u [u_{j'}] + \Exp_u[u_{j'}]^2) - \Exp_u[u_{j}]^2 \Exp_u[u_{j'}]^2 \right]\\
=& \frac{2}{N} \overline{\nabla w}_{ij}^2 + \frac{2}{N} \Var_x(\nabla w_{ij} \rvert x) + \frac{1}{N} \Exp_x \left[\sum_{j' \neq j} \nabla w_{ij'}^2 \Var_u (u_{j}) \Var_u (u_{j'}) \right]\\
=& \frac{2}{N} \overline{\nabla w}_{ij}^2 + \frac{2}{N} \Var_x(\nabla w_{ij} \rvert x) + \frac{1}{N} \sum_{j' \neq j} \Exp_x \left[\nabla w_{j'}^2\right]\\
=& \frac{1}{N} \left[ \overline{\nabla w}_{ij}^2 + \Var_x(\nabla w_{ij} \rvert x) + \sum_{j'} \left( \overline{\nabla w}_{ij'}^2 
 + \Var_x(\nabla w_{ij'} \rvert x) \right)
\right].
\end{align}

$Z_3$ is nonzero if the perturbations are shared within a batch. Assuming that the perturbations are shared,
\begin{align}
Z_3 =& \frac{1}{N^2} \Exp_B \left[ \sum_{x^{(n)} \in B}\sum_{x^{(m)} \in B \setminus \{x^{(n)}\}} \Cov_u (\left. g_a(w_{ij}) \right\rvert x^{(n)}, \left. g_a(w_{ij}) \right\rvert x^{(m)})
\right]\\
= & \frac{1}{N^2} \Exp_B \left[
\sum_{x^{(n)} \in B}\sum_{x^{(m)} \in B \setminus \{x^{(n)}\}}
\Exp_u\left[
\left. g_a(w_{ij}) \right\rvert x^{(n)} \left. g_a(w_{ij}) \right\rvert x^{(m)}
\right]
- \Exp_u \left[\left. g_a(w_{ij}) \right\rvert x^{(n)}\right] \Exp_u \left[\left. g_a(w_{ij}) \right\rvert x^{(m)}\right]
\right]\\
= & \frac{1}{N^2} \Exp_B \left[
\sum_{x^{(n)} \in B}\sum_{x^{(m)} \in B \setminus \{x^{(n)}\}}
\Exp_u\left[
\left. g_a(w_{ij}) \right\rvert x^{(n)} \left. g_a(w_{ij}) \right\rvert x^{(m)}
\right]
- \nabla w_{ij} \rvert x^{(n)} \nabla w_{ij} \rvert x^{(m)}
\right]\\
= & \frac{1}{N^2} \Exp_B \left[
\sum_{x^{(n)} \in B}\sum_{x^{(m)} \in B \setminus \{x^{(n)}\}}
\Exp_u\left[
 \left(\sum_{j'} \nabla w_{ij'} \rvert x^{(n)} u_{j'} \right) u_{j} \left(\sum_{j'} \nabla w_{ij'} \rvert x^{(m)}  u_{j'} \right) u_{j}
\right] 
- \right. \\
& \left. \nabla w_{ij} \rvert x^{(n)} \nabla w_{ij} \rvert x^{(m)}
\right]
\end{align}
\begin{align}
= & \frac{1}{N^2} \Exp_B \left[
\sum_{x^{(n)} \in B}\sum_{x^{(m)} \in B \setminus \{x^{(n)}\}}
\Exp_u
 \left[ \sum_{j'} \sum_{j''} \nabla w_{ij'} \rvert x^{(n)}\nabla w_{ij''} \rvert x^{(m)} u_{j'} u_{j''} u_j^2 \right] \right]  - \\
& 
\frac{1}{N^2} \Exp_B \left[
\sum_{x^{(n)} \in B}\sum_{x^{(m)} \in B \setminus \{x^{(n)}\}} \nabla w_{ij} \rvert x^{(n)} \nabla w_{ij} \rvert x^{(m)}
\right]\\
= & \frac{1}{N^2} \Exp_B \left[
\sum_{x^{(n)} \in B}\sum_{x^{(m)} \in B \setminus \{x^{(n)}\}}
\Exp_u\left[ 
 \nabla w_{ij} \rvert x^{(n)} \nabla w_{ij} \rvert x^{(m)} v_{ij}^4 + \right. \right.  \\
 & \nabla w_{ij} \rvert x^{(n)} u_{j}^3 \sum_{j'\neq j} \nabla w_{ij'} \rvert x^{(m)}  u_{j'} +  \nabla w_{ij} \rvert x^{(m)} u_{j}^3 \sum_{j'\neq j} \nabla w_{ij'} \rvert x^{(n)} u_{j'} + \\
 & \sum_{j'\neq j} \nabla w_{ij'} \rvert x^{(m)} \nabla w_{ij'} \rvert x^{(n)} u_{j'}^2 u_j^2 + \\
 &
 \left. \left. 
 \sum_{j'\neq j} \sum_{j'' \notin \{j, j'\}} \nabla w_{ij'} \rvert x^{(n)} u_{j'} \nabla w_{ij'} \rvert x^{(m)}  u_{j'} u_{j''} u_{j}^2
\right]
\right] - \\
& 
\frac{1}{N^2} \left( \Exp_{x^{(n)}} \Exp_{x^{(m)}} \left[
\sum_n \sum_{m \neq n} \nabla w_{ij} \rvert x^{(n)} \nabla w_{ij} \rvert x^{(m)}
\right] \right)\\
= & \frac{1}{N^2} \Exp_B \left[
\sum_{x^{(n)} \in B}\sum_{x^{(m)} \in B \setminus \{x^{(n)}\}}
\Exp_u \left[ 
 \nabla w_{ij} \rvert x^{(n)} \nabla w_{ij} \rvert x^{(m)} v_{ij}^4 + 
 \sum_{j'\neq j} \nabla w_{ij'} \rvert x^{(m)} \nabla w_{ij'} \rvert x^{(n)} u_{j'}^2 u_j^2 \right] \right] - \\
& 
\frac{1}{N^2} \left( \Exp_{x^{(n)}} \Exp_{x^{(m)}}  \left[
\sum_n \sum_{m \neq n} \nabla w_{ij} \rvert x^{(n)} \nabla w_{ij} \rvert x^{(m)}
\right] \right)\\
= & \frac{1}{N^2} \Exp_B \left[
\sum_{x^{(n)} \in B}\sum_{x^{(m)} \in B \setminus \{x^{(n)}\}}
\Exp_u \left[ 
 \nabla w_{ij} \rvert x^{(n)} \nabla w_{ij} \rvert x^{(m)} v_{ij}^4 + 
 \sum_{j'\neq j} \nabla w_{ij'} \rvert x^{(m)} \nabla w_{ij'} \rvert x^{(n)} u_{j'}^2 u_j^2 \right] \right] - \\
& 
\frac{1}{N^2} \left( \sum_n \sum_{m \neq n} \overline{\nabla w}_{ij}^2 \right)\\
= & \frac{1}{N^2} \Exp_B \left[
\sum_{x^{(n)} \in B}\sum_{x^{(m)} \in B \setminus \{x^{(n)}\}}
 \nabla w_{ij} \rvert x^{(n)} \nabla w_{ij} \rvert x^{(m)} \Exp_v\left[v_{ij}^4\right] + \sum_{j'\neq j}\nabla w_{ij'} \rvert x^{(n)}\nabla w_{ij'} \rvert x^{(m)} \Exp_u\left[u_{j'}^2\right]\Exp_u\left[u_{j}^2\right]
\right] - \\
& 
\frac{1}{N^2} \left( \sum_n \sum_{m \neq n} \overline{\nabla w}_{ij}^2 \right)\\
= & \frac{1}{N^2} \Exp_B \left[
\sum_{x^{(n)} \in B}\sum_{x^{(m)} \in B \setminus \{x^{(n)}\}}
 3 \nabla w_{ij} \rvert x^{(n)} \nabla w_{ij} \rvert x^{(m)} + \sum_{j'\neq j}\nabla w_{ij'} \rvert x^{(n)}\nabla w_{ij'} \rvert x^{(m)}
\right] - \\
& 
\frac{1}{N^2} \left( \sum_n \sum_{m \neq n} \overline{\nabla w}_{ij}^2 \right)
\end{align}
\begin{align}
= & \frac{1}{N^2} \left[
\sum_n \sum_{m \neq n}
\left(
 3 \Exp_x\left[ \nabla w_{ij} \rvert x\right]^2 + \sum_{j'\neq j} \Exp_x \left[\nabla w_{ij'} \rvert x\right]^2 \right)
 \right]
 - 
\frac{1}{N^2} \left( \sum_n \sum_{m \neq n} \overline{\nabla w}_{ij}^2 \right)\\
= & \frac{1}{N^2} \left[
\sum_n \sum_{m \neq n}
\left(
2 \overline{\nabla w}_{ij}^2 + \sum_{j'\neq j} \overline{\nabla w}_{ij'}^2 \right) 
 \right]
 - 
\frac{1}{N^2} \left( \sum_n \sum_{m \neq n} \overline{\nabla w}_{ij}^2 \right)\\
= & \frac{1}{N^2} \left[
\sum_n \sum_{m \neq n}
\left(
 \overline{\nabla w}_{ij}^2 + \sum_{j'\neq j} \overline{\nabla w}_{ij'}^2 \right) 
 \right]\\
= & \frac{N(N-1)}{N^2} \left(
 \overline{\nabla w}_{ij}^2 + \sum_{j'\neq j} \overline{\nabla w}_{ij'}^2 \right).
\end{align}

Then we compute the average variance across all weight dimensions (for shared perturbation):
\begin{align}
\mVar(g_a(w_{ij})) = & \frac{1}{pq} \sum_{ij} \Var(g_a(w_{ij})) \\
= & \frac{1}{pq} \sum_{ij} \left\{ Z_1 + Z_2 + Z_3 \right\} \\
= & \frac{1}{pq} \sum_{ij} \left\{ \frac{1}{N} \Var_x\left( \nabla w_{ij} \rvert_{x} \right) + \right. \\
 & \frac{1}{N} \left[ \overline{\nabla w}_{ij}^2 + \Var_x(\nabla w_{ij} \rvert x) + \sum_{j'} \left( \overline{\nabla w}_{ij'}^2 
 + \Var_x(\nabla w_{ij'} \rvert x) \right) \right] + \\
 & 
 \frac{N(N-1)}{N^2} \left(\overline{\nabla w}_{ij}^2 + \sum_{j'\neq j} \overline{\nabla w}_{ij'}^2 \right)\\
= & \frac{1}{pq} \sum_{ij} \left\{ \frac{1}{N} \Var_x\left( \nabla w_{ij} \rvert_{x} \right) + \right. \\
 & \frac{1}{N} \left[ \Var_x(\nabla w_{ij} \rvert x) + \sum_{j'} \Var_x(\nabla w_{ij'} \rvert x) \right] + \left. \left(
  \overline{\nabla w}_{ij}^2 + \sum_{j'} \overline{\nabla w}_{ij'}^2 \right) \right\}\\
= & \frac{2}{N} \mVar \left( \nabla w \right) + \frac{q}{N} \mVar \left( \nabla w \right) + (q+1) \mSqNorm(\overline{\nabla w}) \\
= & \frac{q+2}{N} V + (q+1)S.
\end{align}

If the perturbations are independent, we show that $Z_3$ is 0.
\begin{align}
Z_3 = & \frac{1}{N^2} \Exp_B \left[ \sum_{x^{(n)} \in B}\sum_{x^{(m)} \in B \setminus \{x^{(n)}\}} \Cov_u (\left. g_a(w_{ij}) \right\rvert x^{(n)}, \left. g_a(w_{ij}) \right\rvert x^{(m)})
\right]\\
= & \frac{1}{N^2} \Exp_B \left[
\sum_{x^{(n)} \in B}\sum_{x^{(m)} \in B \setminus \{x^{(n)}\}}
\Exp_u \left[
\left. g_a(w_{ij}) \right\rvert x^{(n)} \left. g_a(w_{ij}) \right\rvert x^{(m)}
\right]
- \Exp_u\left[\left. g_a(w_{ij}) \right\rvert x^{(n)}\right] \Exp_u \left[\left. g_a(w_{ij}) \right\rvert x^{(m)}\right]
\right]
\end{align}
\begin{align}
= & \frac{1}{N^2} \Exp_B \left[
\sum_{x^{(n)} \in B}\sum_{x^{(m)} \in B \setminus \{x^{(n)}\}}
\Exp_u \left[
\left. g_a(w_{ij}) \right\rvert x^{(n)} \left. g_a(w_{ij}) \right\rvert x^{(m)}
\right]
- \nabla w_{ij} \rvert x^{(n)} \nabla w_{ij} \rvert x^{(m)}
\right]\\
= & \frac{1}{N^2} \Exp_B \left[
\sum_{x^{(n)} \in B}\sum_{x^{(m)} \in B \setminus \{x^{(n)}\}}
\Exp_u \left[
 \left(\sum_{j'} \nabla w_{ij'} \rvert x^{(n)} u^{(n)}_{j'} \right) u_{j}^{(n)} \left(\sum_{j'} \nabla w_{ij'} \rvert x^{(m)}  u^{(m)}_{j'} \right) u_{j}^{(m)}
\right]
- \right. \\
& \left. \nabla w_{ij} \rvert x^{(n)} \nabla w_{ij} \rvert x^{(m)} \right]\\
= & \frac{1}{N^2} \Exp_B \left[
\sum_{x^{(n)} \in B}\sum_{x^{(m)} \in B \setminus \{x^{(n)}\}}
\Exp_u
 \left[ \sum_{j'} \sum_{j''} \nabla w_{ij'} \rvert x^{(n)}\nabla w_{ij''} \rvert x^{(m)} u^{(n)}_{j'} u^{(m)}_{j''} u^{(n)}_j u^{(m)}_j \right] \right]  - \\
& 
\frac{1}{N^2} \Exp_B \left[
\sum_{x^{(n)} \in B}\sum_{x^{(m)} \in B \setminus \{x^{(n)}\}} \nabla w_{ij} \rvert x^{(n)} \nabla w_{ij} \rvert x^{(m)}
\right]\\
= & \frac{1}{N^2} \Exp_B \left[
\sum_{x^{(n)} \in B}\sum_{x^{(m)} \in B \setminus \{x^{(n)}\}}
\Exp_u \left[ 
 \nabla w_{ij} \rvert x^{(n)} \nabla w_{ij} \rvert x^{(m)} u_{j}^{(n)2} u_{j}^{(m)2} + \right. \right.  \\
 & \nabla w_{ij} \rvert x^{(n)} u_{j}^{(n)2} u_{j}^{(m)} \sum_{j'\neq j} \nabla w_{ij'} \rvert x^{(m)}  u_{j'}^{(m)} +  
   \nabla w_{ij} \rvert x^{(m)} u_{j}^{(m)2} u_{j}^{(n)} \sum_{j'\neq j} \nabla w_{ij'} \rvert x^{(n)} u_{j'}^{(n)} + \\
 & \sum_{j'\neq j} \nabla w_{ij'} \rvert x^{(m)} \nabla w_{ij'} \rvert x^{(n)} u_{j'}^{(m)} u_{j'}^{(n)} u_j^{(m)} u_j^{(n)} + \\
 &
 \left. \left. 
 \sum_{j'\neq j} \sum_{j'' \notin \{j, j'\}} \nabla w_{ij'} \rvert x^{(n)} u_{j'} \nabla w_{ij'} \rvert x^{(m)}  u^{(n)}_{j'} u^{(m)}_{j''} u^{(n)}_j u^{(m)}_j
\right]
\right] - \\
& 
\frac{1}{N^2} \left(\Exp_{x^{(n)}} \Exp_{x^{(m)}} \left[
\sum_n \sum_{m \neq n} \nabla w_{ij} \rvert x^{(n)} \nabla w_{ij} \rvert x^{(m)}
\right] \right)\\
= & \frac{1}{N^2} \Exp_B \left[
\sum_{x^{(n)} \in B}\sum_{x^{(m)} \in B \setminus \{x^{(n)}\}}
\Exp_u \left[ 
 \nabla w_{ij} \rvert x^{(n)} \nabla w_{ij} \rvert x^{(m)} u_{j}^{(n)2} u_{j}^{(m)2} + \right. \right. \\
& \left.\left.
 \sum_{j'\neq j} \nabla w_{ij'} \rvert x^{(m)} \nabla w_{ij'} \rvert x^{(n)} u_{j'}^{(m)} u_{j'}^{(n)} u_j^{(m)} u_j^{(n)} \right] \right] - \\
& 
\frac{1}{N^2} \left( \Exp_{x^{(n)}} \Exp_{x^{(m)}} \left[
\sum_n \sum_{m \neq n} \nabla w_{ij} \rvert x^{(n)} \nabla w_{ij} \rvert x^{(m)}
\right] \right)\\
= & \frac{1}{N^2} \Exp_B \left[
\sum_{x^{(n)} \in B}\sum_{x^{(m)} \in B \setminus \{x^{(n)}\}}
\Exp_u \left[ 
 \nabla w_{ij} \rvert x^{(n)} \nabla w_{ij} \rvert x^{(m)} u_{j}^{(n)2} u_{j}^{(m)2} \right]\right] - \\
& 
\frac{1}{N^2} \left( \sum_n \sum_{m \neq n} \overline{\nabla w}_{ij}^2 \right)\\
= & \frac{1}{N^2} \Exp_B \left[
\sum_{x^{(n)} \in B}\sum_{x^{(m)} \in B \setminus \{x^{(n)}\}}
 \nabla w_{ij} \rvert x^{(n)} \nabla w_{ij} \rvert x^{(m)} \Exp_u \left[u_{j}^{(n)2}\right] \Exp_u \left[ u_{j}^{(m)2} \right] \right] - \\
& \frac{1}{N^2} \left( \sum_n \sum_{m \neq n} \overline{\nabla w}_{ij}^2 \right)
\end{align}
\begin{align}
= & \frac{1}{N^2} \Exp_B \left[
\sum_{x^{(n)} \in B}\sum_{x^{(m)} \in B \setminus \{x^{(n)}\}}
\nabla w_{ij} \rvert x^{(n)} \nabla w_{ij} \rvert x^{(m)} \right] - \frac{1}{N^2} \left( \sum_n \sum_{m \neq n} \overline{\nabla w}_{ij}^2 \right)\\
= & \frac{1}{N^2}  \left(
\sum_n \sum_{m \neq n} \Exp_x\left[ \nabla w_{ij} \rvert x\right]^2 \right) -
\frac{1}{N^2} \left(
\sum_n \sum_{m \neq n} \overline{\nabla w}_{ij}^2 \right)\\
= & 0.
\end{align}

Then the average variance becomes:
\begin{align}
\mVar(g_a(w_{ij})) = & \frac{1}{pq} \sum_{ij} \Var(g_a(w_{ij})) \\
= & \frac{1}{pq} \sum_{ij} \left\{ Z_1 + Z_2 + Z_3 \right\} \\
= & \frac{1}{pq} \sum_{ij} \left\{ \frac{1}{N} \Var_x\left( \nabla w_{ij} \rvert_{x} \right) + \right. \\
 & \left. \frac{1}{N} \left[ \overline{\nabla w}_{ij}^2 + \Var_x(\nabla w_{ij} \rvert x) + \sum_{j'} \left( \overline{\nabla w}_{ij'}^2 
  + \Var_x(\nabla w_{ij'} \rvert x) \right) \right] \right\} \\
= & \frac{q+2}{N} \mVar \left( \nabla w \right) + \frac{q+1}{N} \mSqNorm(\overline{\nabla w}) \\
= & \frac{q+2}{N} V + \frac{q+1}{N}S.
\end{align}
\end{proof}

\rebuttal{
\begin{figure}[H]
    \centering
    \includegraphics[width=\textwidth]{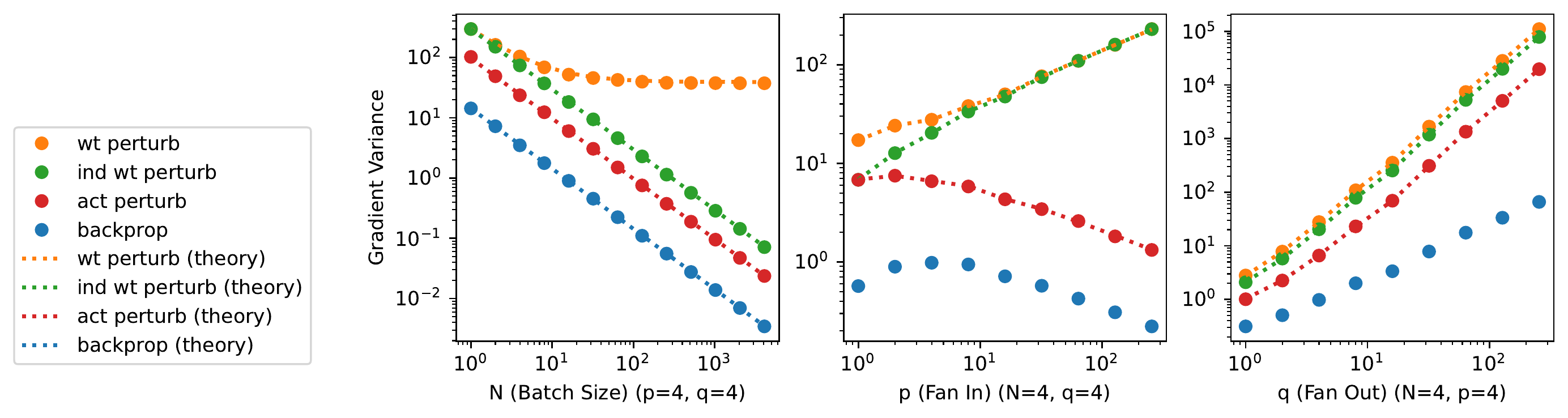}
    \caption{Numerical verification of the theoretical variance properties}
    \label{fig:variance_theory}
\end{figure}
\section{Numerical Simulation of Variances}
\label{app:variance_theory}
In Figure~\ref{fig:variance_theory}, we ran numerical simulation experiments to verify our analytical variance properties. We used a multi-layer network with 4 input units, 4 hidden units, 1 output unit, a $\tanh$ activation function, and the mean squared error loss. We varied the batch size ($N$) between 1 and 4096. We tested the gradient estimator of the first layer weights using 5000 random samples. We also calculated the theoretical variance by applying the gradient norm and gradient variance constants found by backprop, from 5000 mini-batch true gradients. We then fixed the batch size to be 4 and vary the number of input units ($p$, fan in) and the number of hidden units ($q$, fan out) between 1 and 256. The theoretical variance for backprop was only computed for the batch size experiment since it is an inverse relationship ($\frac{1}{N}$), but for fan in and fan out, we do not aim to analyze the theoretical variances here. \textbf{``wt perturb''} stands for weight perturbation with shared noise; \textbf{``ind wt perturb''} stands for weight perturbation with independent noise; and \textbf{``act perturb''} stands for activity perturbation with independent noise. Note that indepedent weight perturbation is much more costly to compute in neural networks. As shown in the figure, the empirical variances match very well with our theoretical predictions.
}

\section{Training Details}
Here we provide more training details.
\paragraph{MNIST.} We use a batch size of 128, and the SGD optimizer with learning rate 0.01 and momentum 0.9 for a total of 1000 epochs with no data augmentation and a linear learning rate decay schedule.

\paragraph{CIFAR-10.} For the supervised experiments, we use a batch size of 128 and the SGD optimizer with learning rate 0.01 and momentum 0.9 for a total of 200 epochs with no data augmentation and a linear learning rate decay schedule. For the contrastive M/8 experiments, we use a batch size of 512 and the SGD optimizer with learning rate 1.0 and momentum 0.9 for a total of 1000 epochs with BYOL data augmentation using area crop lower bound to be 0.5 and a cosine decay schedule with a warm-up period of 10 epochs. For the contrastive L/8 experiments, we use a batch size of 2048 and the SGD optimizer with learning rate 4.0 and momentum 0.9 for a total of 1000 epochs with BYOL data augmentation~\citep{grill2020bootstrap} using area crop lower bound to be 0.3 and a cosine decay schedule with a warm-up period of 10 epochs. 

\paragraph{ImageNet.} For the supervised experiments, we use a batch size of 256 and the SGD optimizer with learning rate 0.05 and momentum 0.9 for a total of 120 epochs with BYOL data augmentation~\citep{grill2020bootstrap} using area crop lower bound to be 0.3 and a cosine learning rate decay schedule with a warm-up period of 10 epochs. For the contrastive experiments, we use a batch size of 2048 and the LARS optimizer with learning rate 0.1 and momentum 0.9 for a total of 800 epochs with BYOL data augmentation~\citep{grill2020bootstrap} using area crop lower bound to be 0.08 and a cosine learning rate decay schedule with a warm-up period of 10 epochs.

\rebuttal{
\vspace{-0.1in}
\section{Fused JVP/VJP Details}
\label{app:fused}
In Algorithm~\ref{alg:fused}, we provide a JAX code snippet implementing fused operators for the supervised cross entropy loss. ``Fused'' here means that we package several operations into one function. In the supervised cross entropy loss, we combine average pooling, channel concatenation, a linear classifier layer, and cross entropy all together. Key steps and expected tensor shapes are annotated in the comments. The fused InfoNCE loss implementation will be included in our full code release.
}

\rebuttal{
\vspace{-0.1in}
\section{LocalMixer Architecture}
In Algorithm~\ref{alg:localmixer}, we provide code in JAX style that implements our proposed LocalMixer architecture.
}

\section{Additional Results}
\label{app:additional-results}
In this section we provide additional experimental results.

\paragraph{Normalization scheme.}
Table~\ref{tab:normalization} compares different normalization schemes. Layer normalization (LN) is often better than batch normalization (BN) on our mixer architecture. Local LN is better on contrastive learning experiments and achieves lower error rates using forward gradient learning. Although in our main paper, backprop were used in normalization layers, backprop is not necessary for Local LN, \cf ``NG'' (No Gradient) columns in Table~\ref{tab:normalization}.

\paragraph{Place of normalization.} We investigate the places where we add normalization layers. Traditionally, normalization is added after linear layers. In MLPMixer, LN is added at the beginning of each block. With our forward gradient learning, it is now a question of which location is the optimal design. Adding it after the linear layer has the advantage of shaping the activations to be more well behaved, which can make perturbation learning more effective. Adding it before the linear layer can also help reduce the variance since the inputs always get multiplied with the gradient of the output activity. The results are reported in Table~\ref{tab:normplace}. Adding normalization both before and after the linear layer helps forward gradient to achieve lower training errors. While this could result in some overfitting on supervised learning, it is good for contrastive learning which needs more model capacity. This is reasonable as forward gradient introduce a lot of variances, and more normalization layers help achieve better training performance.

\begin{algorithm}[H]
\caption{Naïve and fused local cross entropy, with custom JVP and VJP operators.}
\label{alg:fused}
\definecolor{codeblue}{rgb}{0.25,0.5,0.5}
\lstset{
  backgroundcolor=\color{white},
  basicstyle=\fontsize{7.2pt}{7.2pt}\ttfamily\selectfont,
  columns=fullflexible,
  breaklines=true,
  captionpos=b,
  commentstyle=\fontsize{7.2pt}{7.2pt}\color{codeblue},
  keywordstyle=\fontsize{7.2pt}{7.2pt},
}
\begin{lstlisting}[language=python]
# N: batch size; P: num patches; G: num grps; C: num channels; D: channels / grp; K: num cls
# x: encoder features [N,P,G,C/G]
# w: classifier weights [C,K]; b: classifier bias [K]
# labels: class labels [N,K]
import jax
import jax.numpy as jnp
from jax.scipy.special import logsumexp

def naive_avg_group_linear_xent(x, w, b, labels):
    N, P, G, _ = x.shape
    # Average pooling, with stop gradients. [N,P,G,C/G] -> [N,1,G,C/G]
    avg_pool_p = jnp.mean(x, axis=1, keepdims=True)
    x_div_p = x / float(P)
    # [N,P,G,C/G]
    x = x_div_p + jax.lax.stop_gradient(avg_pool_p - x_div_p)
    # Concatenate everything, with stop gradients. [N,P,G,C] -> [N,P,G,G,C/G]
    x = jnp.tile(jnp.reshape(x, [N, P, 1, G, -1]), [1, 1, G, 1, 1])
    mask = jnp.eye(G)[None, None, :, :, None]
    x = mask * x + jax.lax.stop_gradient((1.0 - mask) * x)
    # [N,P,G,G,C/G] -> [N,P,G,C]
    x = jnp.reshape(x, [N, P, G, -1])
    logits = jnp.einsum('npgc,cd->npgd', x, w) + b
    logits = logits - logsumexp(logits, axis=-1, keepdims=True)
    loss = -jnp.sum(logits * labels[:, None, None, :], axis=-1)
    return loss

def fused_avg_group_linear_xent(x, w, b, labels):
    # This is for forward pass. The numerical value of each local loss should be the same.
    # So we compute one and replicate it many times.
    N, P, G, _ = x.shape
    # [N,P,G,C/G] -> [N,G,C/G]
    x_avg = jnp.mean(x, axis=1)
    # [N,G,C/G] -> [N,C]
    x_grp = jnp.reshape(x_avg, [x_avg.shape[0], -1])
    # [N,C] -> [N,K]
    logits = jnp.einsum('nc,ck->nk', x_grp, w) + b
    logits = logits - logsumexp(logits, axis=-1, keepdims=True)
    loss = -jnp.sum(logits * labels, axis=-1)
    # Key step: after computing the loss, replicate it for PxG times. [N] -> [N,P,G]
    return jnp.tile(jnp.reshape(loss, [N, 1, 1]), [1, P, G])

def fused_avg_group_linear_xent_jvp(primals, tangents):
    # This JVP operator performs both regular forward pass and the forward autodiff.
    x, w, b, labels = primals
    dx, dw, db, dlabels = tangents
    N, P, G, D = x.shape
    dx_avg = dx / float(P)
    # Reshape the classifier weights, since only one group passes gradient at a time.
    w_ = jnp.reshape(w, [G, D, -1])
    b = jnp.reshape(b, [-1])
    # Regular forward pass
    # [N,P,G,C/G] -> [N,G,C/G]
    x_avg = jnp.mean(x, axis=1)
    # [N,G,C/G] -> [N,C]
    x_grp = jnp.reshape(x_avg, [x_avg.shape[0], -1])
    # [N,C] -> [N,K]
    logits = jnp.einsum('nd,dk->nk', x_grp, w) + b
    logits = logits - logsumexp(logits, axis=-1, keepdims=True)
    loss = -jnp.sum(logits * labels, axis=-1)
    # We can compute the gradient through cross entropy first.
    dlogits_bwd = jax.nn.softmax(logits, axis=-1) - labels # [N,K]
    # Key step: dloss = dx * w * dloss/dlogit + (x * dw + db) * dloss/dlogit
    # Do the einsum together to avoid replicating outputs.
    dloss = jnp.einsum('npgd,gdk,nk->npg', dx_avg, w_, dlogits_bwd) + jnp.einsum('nk,nk->n',
        (jnp.einsum('nc,ck->nk', x_grp, dw) + db), dlogits_bwd)[:, None, None] # [N,P,G]
    # Return loss and loss gradients [N,P,G].
    return jnp.tile(jnp.reshape(loss, [N, 1, 1]), [1, P, G]), dloss

def fused_avg_group_linear_xent_vjp(res, g):
    # This is a fused backprop (VJP) operator.
    x, w, logits, labels = res
    N, P, G, D = x.shape
    x_avg = jnp.mean(x, axis=1)
    x_grp = jnp.reshape(x_avg, [x_avg.shape[0], -1])
    # Key step: only the first patch/group gradients since everything is the same.
    g_ = g[:, 0:1, 0]
    dlogits = g_ * (jax.nn.softmax(logits, axis=-1) - labels)  # [N,K]
    # Remember to multiply gradients by PG times due to weight sharing.
    db = jnp.reshape(jnp.sum(dlogits, axis=[0]), [-1]) * float(P * G)
    dw = jnp.reshape(jnp.einsum('nc,nk->ck', x_grp, dlogits),
                   [G * D, -1]) * float(P * G)
    # Key step: use grouped weights to perform backprop.
    dx = jnp.einsum('nd,gcd->ngc', dlogits, jnp.reshape(w, [G, C, -1])) / float(P)
    # Broadcast gradients across patches.
    dx = jnp.tile(dx[:, None, :, :], [1, P, 1, 1])
    return dx, dw, db, None

\end{lstlisting}
\end{algorithm}

\begin{algorithm}[H]
\caption{A LocalMixer architecture implemented with JAX style code.}
\label{alg:localmixer}
\definecolor{codeblue}{rgb}{0.25,0.5,0.5}
\lstset{
  backgroundcolor=\color{white},
  basicstyle=\fontsize{7.2pt}{7.2pt}\ttfamily\selectfont,
  columns=fullflexible,
  breaklines=true,
  captionpos=b,
  commentstyle=\fontsize{7.2pt}{7.2pt}\color{codeblue},
  keywordstyle=\fontsize{7.2pt}{7.2pt},
}
\begin{lstlisting}[language=python]
import jax
import jax.numpy as jnp

def linear(x, w, b):
    """Linear layer."""
    return jnp.einsum('npc,cd->npd', x, w) + b

def group_linear(x, w, b):
    """Linear layer with groups."""
    return jnp.einsum('npgc,gcd->npgd', x, w) + b

def normalize(x, axis=-1, eps=1e-5):
    """Normalization layer."""
    mean = jnp.mean(x, axis=axis, keepdims=True)
    mean_of_squares = jnp.mean(jnp.square(x), axis=axis, keepdims=True)
    var = mean_of_squares - jnp.square(mean)
    inv = jax.lax.rsqrt(var + eps)
    y = (x - mean) * inv
    return y

def block0(x, params):
    """Initial block with only channel mixing."""
    N, P, _ = x.shape
    G = num_groups
    x = normalize(x)
    x = linear(x, params[0][0], params[0][1])
    x = normalize(x)
    x = jax.nn.relu(x)
    x = jnp.reshape(x, [N, P, G, -1])
    x = normalize(x)
    x = group_linear(x, params[1][0], params[1][1])
    x = normalize(x)
    x = jax.nn.relu(x)
    return x

def mlp_block(x, params):
    """Regular MLP block with token & channel mixing."""
    N, P, G, _ = x.shape
    inputs = x
    # Token mixing.
    x = jnp.reshape(x, [N, P, -1])
    x = normalize(x)
    x = jnp.swapaxes(x, 1, 2)
    x = linear(x, params[0][0], params[0][1])
    x = jnp.swapaxes(x, 1, 2)
    x = normalize(x)
    x = jax.nn.relu(x)

    # Channel mixing.
    x = normalize(x)
    x = linear(x, params[1][0], params[1][1])
    x = normalize_layer(x)
    x = jax.nn.relu(x)
    x = jnp.reshape(x, [N, P, G, -1])
    x = normalize(x)
    x = group_linear(x, params[2][0], params[2][1])
    x = normalize(x)
    x = x + inputs
    x = jax.nn.relu(x)
    return x

def local_mixer(x, params):
    """LocalMixer."""
    x = preprocess(x, image_mean, image_std, num_patches)
    pred_local = [] # Local predictions.
    # Build network blocks.
    for blk in range(num_blocks):
        if blk == 0:
            x = block0(x, params[f'block_{blk}'])
        else:
            x = mlp_block(x, params[f'block_{blk}'])

        # Projector connects to local losses.
        x_proj = normalize(x)
        pred_local.append(linear(x_proj, params[f'proj_{blk}'][0], params[f'proj_{blk}'][1]))

        # Disconnect gradients.
        x = jax.lax.stop_gradient(x)
    x = jnp.reshape(x, [x.shape[0], x.shape[1], -1])
    x = jnp.mean(x, axis=1)  # [N,C]
    x = normalize(x)
    pred = linear(x, params['classifier'][0], params['classifier'][1])
    return pred, pred_local
\end{lstlisting}
\end{algorithm}

\begin{table}[t]
    \centering
    \begin{small}
    \resizebox{\textwidth}{!}{
    \begin{tabular}{c|cccc|cccc} 
    \toprule
               &  \multicolumn{4}{c|}{Supervised M/8/16} & \multicolumn{4}{c}{Contrastive M/8/16}\\
               & BP & LG-BP & LG-FG-A & LG-FG-A (NG) & BP & LG-BP & LG-FG-A & LG-FG-A (NG) \\
    \midrule
     BN        & \textbf{30.38} / \textbf{0.00} & 33.41 / 5.41 & 32.84 / 23.09 & 33.48 / 20.80 & 27.56 / 24.39 & 30.27 / 28.03 & 35.47 / 32.71 & 37.41 / 31.52 \\
     LN        & 30.55 / 0.00 & \textbf{33.17} / 7.09 & \textbf{29.03} / \textbf{17.63} & \textbf{29.33} / 19.26 & 23.52 / \textbf{20.71} & \textbf{27.41} / \textbf{24.73} & 34.21 / 31.38 & 36.24 / 34.12 \\
     Local LN  & 32.89 / 0.00 & 33.84 / \textbf{0.05} & 30.68 / 19.39 & 30.44 / \textbf{17.12} & \textbf{23.24} / 21.03 & 28.42 / 25.20 & \textbf{32.89} / \textbf{31.01} & \textbf{32.25} / \textbf{30.17} \\
    \bottomrule
    \end{tabular}
    }
    \end{small}
    \caption{Comparing different normalization schemes. NG=No normalization gradient. CIFAR-10 test / train error (\%)}
    \label{tab:normalization}
\end{table}

\begin{table}[t]
    \centering
    \begin{small}
    \vspace{0.2in}
    \resizebox{\textwidth}{!}{
    \begin{tabular}{c|ccc|ccc}
    \toprule
                     & \multicolumn{3}{c|}{Supervised M/8/16} & \multicolumn{3}{c}{Contrastive M/8/16}\\
    LN               & BP & LG-BP & LG-FG-A & BP & LG-BP & LG-FG-A \\
    \midrule
    Begin Block      & 31.43 / \textbf{0.00} & 34.73 / 1.45 & 34.89 / 31.11 & 23.27 / 20.69 & \textbf{25.82} / \textbf{22.96} & 89.93 / 90.71 \\
    Before Linear    & 32.50 / \textbf{0.00} & 34.15 / \textbf{0.05} & 33.88 / 27.94 & 22.62 / \textbf{20.38} & NaN / NaN & 35.01 / 32.91 \\
    After Linear     & \textbf{30.38} / \textbf{0.00} & \textbf{29.83} / 0.44 & \textbf{29.35} / 23.19 & 26.50 / 23.98 & 28.97 / 26.67 & 34.10 / 33.18 \\
    Before + After Linear & 33.62 / \textbf{0.00} & 33.84 / \textbf{0.05} & 30.68 / \textbf{19.39} & 23.24 / 21.03 & 28.42 / 25.20 & \textbf{32.89} / \textbf{31.01} \\
    \bottomrule
    \end{tabular}
    }
    \end{small}
    \caption{Place of LayerNorm on CIFAR-10, test / train error. (\%)}
    \label{tab:normplace}
\end{table}

\begin{figure}[ht!]
     \centering
     \begin{subfigure}[b]{0.48\textwidth}
         \centering
         \includegraphics[width=\textwidth]{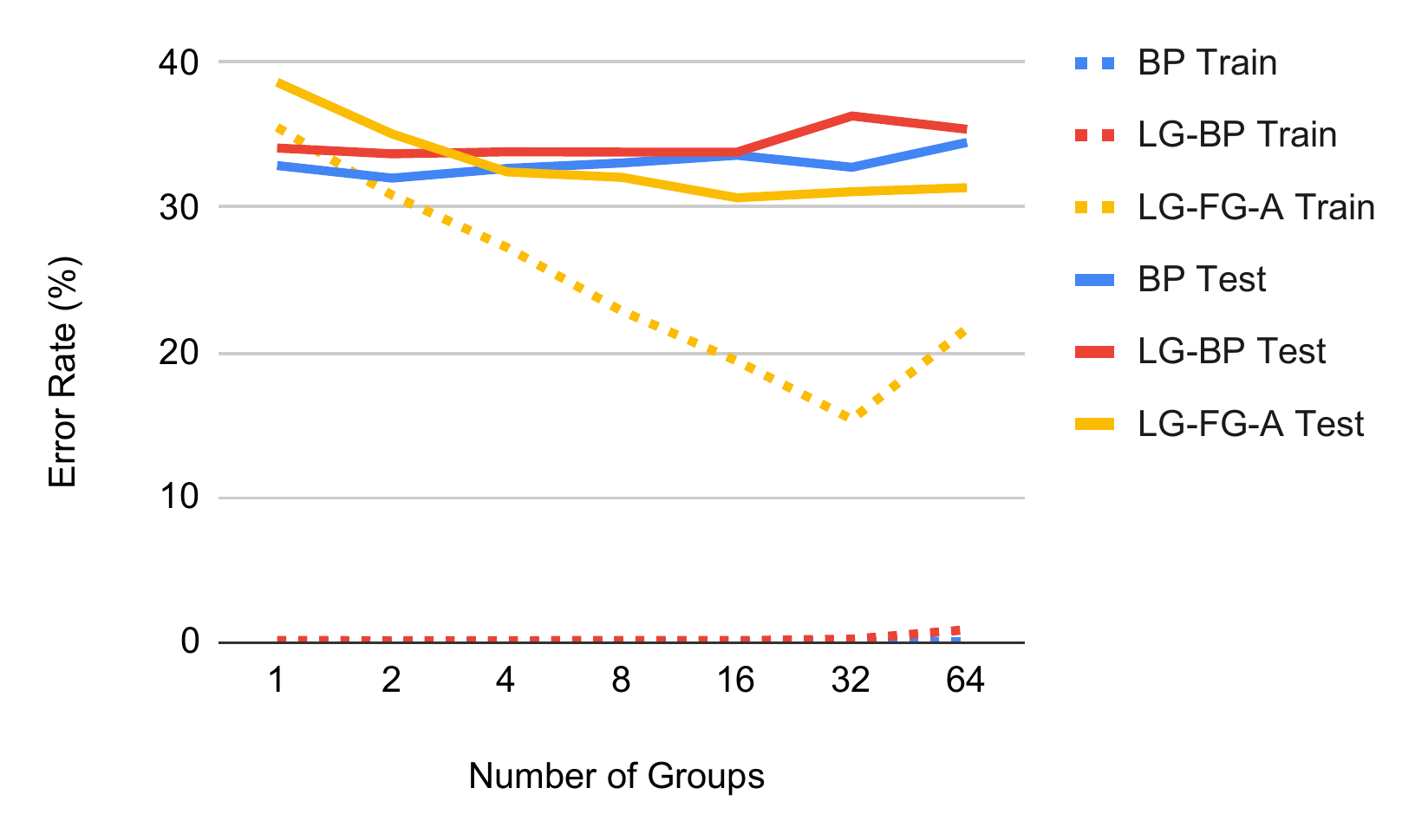}
         \caption{CIFAR-10 Supervised M/8/*}
         \label{fig:sup-group}
     \end{subfigure}
     \hfill
     \begin{subfigure}[b]{0.48\textwidth}
         \centering
         \includegraphics[width=\textwidth]{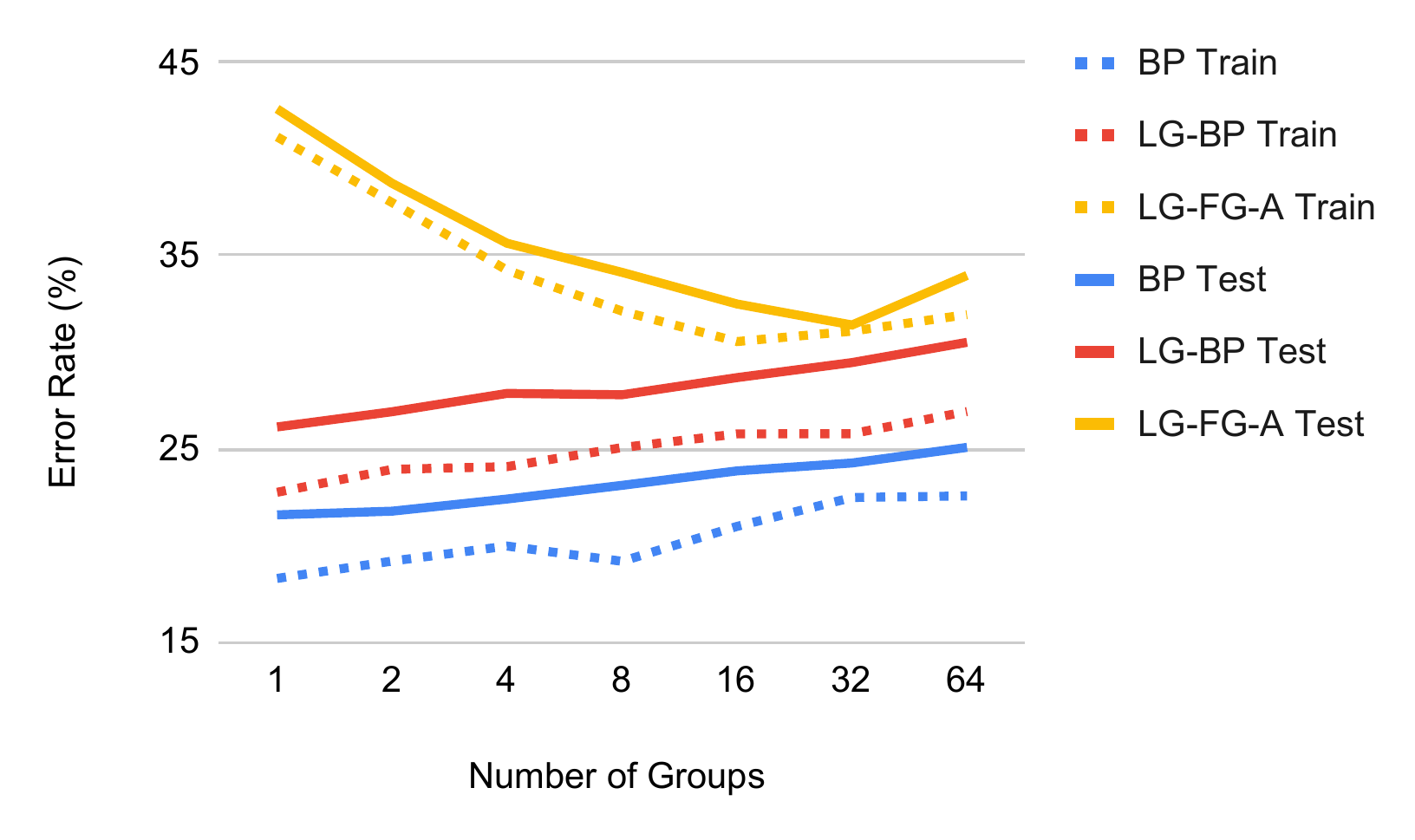}
         \caption{CIFAR-10 Contrastive M/8/*}
         \label{fig:con-group}
     \end{subfigure}
        \caption{Effect of groups. For BP algorithms, groups has a minor effect on the final performance, but for local forward gradient, it significantly reduces the variance and achieves lower error rate on both training and test sets.}
        \label{fig:appgroups}
\end{figure}

\paragraph{Effect of groups.} We provide additional results summarizing the training and test performance of adding more groups in Figure~\ref{fig:appgroups}. Backprop and local greedy backprop always achieve zero training error with increasing number of groups on CIFAR-10 supervised, but adding groups has a significant benefit lowering training errors for forward gradient. This suggests that the main opponent here is still the gradient estimation variance, and lowering training errors can generally make test errors lower too; on the other hand adding groups have negligible effect on backprop. For contrastive learning, here the task requires higher model capacity, and adding groups effectively reduce the model capacity by introducing sparsity in the weight matrix. As a result, we observe a slight drop of less than 5\% performance on both backprop and local greedy backprop. By contrast, forward gradient gains over 10\% of performance by adding 16 groups.

\end{document}